\documentclass[preprint,11pt,authoryear,nonatbib]{elsarticle}
\usepackage[a4paper,margin=1in]{geometry}
\usepackage{setspace}
\usepackage{amsmath,amssymb,amsthm,bm}
\usepackage{graphicx,multirow,booktabs,makecell}
\usepackage{algorithm,algorithmicx,algpseudocode}
\providecommand{\IfDocumentMetadataT}[1]{}

\makeatletter\let\c@author\relax\makeatother
\usepackage[backend=biber,style=apa,natbib=true]{biblatex}
\usepackage[hidelinks]{hyperref}
\newtheorem{proposition}{Proposition}
\newtheorem{theorem}{Theorem}
\newtheorem{corollary}{Corollary}
\newtheorem{lemma}{Lemma}[section]

\makeatletter
\def\ps@pprintTitle{\let\@oddhead\@empty\let\@evenhead\@empty
  \def\@oddfoot{\hfil\thepage\hfil}\let\@evenfoot\@oddfoot}
\makeatother
\begin{document}
\begin{frontmatter}
\title{Learning to Design Convex Relaxations for Nonconvex Optimization}
\author[a]{Jinrun Liu}
\author[b]{Bai Yan}
\author[a]{Tengfei Liu}
\author[b]{Qunfeng Liu}
\author[a]{Jian Yang}
\author[c]{Shi Cheng}
\author[a]{Yuhui Shi}
\author[a]{Qi Zhao\corref{cor}}
\ead{zhaoq@sustech.edu.cn}
\cortext[cor]{Corresponding author.}
\address[a]{Department of Computer Science and Engineering, Southern University of Science and Technology, Shenzhen 518055, China}
\address[b]{School of Computer Science and Technology, Dongguan University of Technology, Dongguan 523808, China}
\address[c]{School of Artificial Intelligence and Computer Science, Shaanxi Normal University, Xi'an 710119, China}
\begin{abstract}
Convex relaxations make nonconvex optimization tractable by replacing hard nonconvex problems with convex surrogates that give valid bounds. Yet their design is still largely manual: a human expert must decide where to relax, which relaxation to use, and how much formulation growth is worth paying. We show that this design process can be learned. The method proposed in this paper, AutoConvexRelax, represents the symbolic formulation of a nonconvex problem as a tripartite graph linking variables, symbolic terms, and their objective or constraint positions. It then learns a policy that composes symbolic rewrites into problem-specific relaxation trajectories as the formulation evolves. We prove that finite action sequences generated by the learned policy preserve relaxation validity under explicit domain, occurrence-context, and representation-consistency conditions. We further establish computable error bounds for mixed relaxations and sufficient conditions for feasible recovery and global-optimality certification. As a first instantiation, we apply the method to nonconvex quadratic problems and fractional problems. On 241 benchmark quadratic instances, the learned policy strengthens McCormick lower bounds by 13.52\%. Its mean lower bound is 3.70\% below strengthened semidefinite relaxations on the normalized comparison metric, while it uses substantially smaller formulations. The learned policy retains this trade-off on fractional instances and reaches near-semidefinite bounds on beamforming and sensor-network localization instances. These results suggest that convex relaxation design can move from a fixed handcrafted choice to an automated learning process that adapts to problem structures. Code is available at \url{https://github.com/auto4opt/AutoConvexRelax}.

\end{abstract}
\begin{keyword}
Nonlinear programming \sep Convex relaxation \sep Learning for optimization \sep Reinforcement learning
\end{keyword}
\end{frontmatter}
\setstretch{1.5}

\section{Introduction}\label{sec_introduction}

Nonconvex optimization appears throughout science and engineering, from signal processing and communications to control, machine learning, and scientific computing \citep{floudas2000deterministic,belotti2013mixed}. Its difficulty comes from nonconvex couplings between variables, such as bilinear products, indefinite quadratics, polynomial expressions, or more general nonlinear compositions \citep{tawarmalani2002convex}. Convex relaxation handles the difficulty by replacing the nonconvex structure with a tractable convex surrogate that gives a valid bound \citep{mccormick1976computability}. A set of relaxation tools has been developed, such as McCormick envelopes, moment relaxations, semidefinite relaxations, and branch-and-bound methods based on relaxation \citep{lasserre2001global,luo2010semidefinite}. This makes relaxation design nontrivial. One must choose where to relax, how to relax, and how much formulation growth to allow.

These choices remain largely manual. While solvers can automate standard envelopes, cuts, and node relaxations once a formulation is fixed, the choices considered in this paper arise during formulation construction: which terms to relax, how to relax them, and how much formulation growth to allow. They remain specified by human experts. No single relaxation strategy is best for all nonconvex terms. The right choice depends on the algebraic form of a term, the variables it couples, its role in the objective or constraints, and the cost of the added variables and constraints. Such dependencies are hard to encode as reusable rules. In practice, one term may be relaxed by McCormick envelopes, another by semidefinite lifting \citep{parrilo2003semidefinite,goemans1995improved}, another by quadratic convex reformulation (QCR) \citep{billionnet2009improving}, and another left unchanged. The rationale behind these choices often remains in expert practice rather than in a reusable procedure. This makes relaxation design slow, hard to reproduce, and difficult to transfer.

This motivates automated relaxation design. The goal is not to select one relaxation family for an entire problem, such as McCormick envelopes, semidefinite relaxation, or QCR. Nor is it to tune parameters within a fixed scheme. The goal is to construct a relaxation trajectory. At each step, the procedure chooses a target nonconvex term, applies a valid relaxation action, and updates the formulation. This keeps known relaxation tools, but shifts their placement and ordering from expert specification to learned decisions. The resulting trajectories can move beyond fixed single-tool schemes and improve the bound--cost trade-off. 

Learning has improved solver decisions and relaxation-oriented choices, including branching, cut selection, relaxation-family selection for quadratically constrained quadratic programs (QCQPs), partitioning, and linearization \citep{bengio2021machine,dedeoglu2026learning,kannan2025strong,baltean2018scoring,bonami2022classifier}. A separate line uses large language models (LLMs) to generate relaxations during inference \citep{liu2026autoco,peng2026nc2c,kim2026relaxations}. These developments motivate a further question: can relaxation design be learned as a validity-preserving and reusable construction process over evolving symbolic formulations? Section~\ref{sec_related_work} analyzes how this task differs from decisions within prescribed relaxation schemes and from generate--then--verify search.

Here we make such relaxation trajectories learnable. Starting from the symbolic formulation of a nonconvex problem, the learned policy repeatedly chooses a target term and an admissible relaxation action, rewrites the formulation, and stops when a convex surrogate is obtained. We refer to this method as AutoConvexRelax. The key is to give the policy a decision state that preserves the symbolic semantics needed for relaxation design. A coefficient-level view is not enough; the policy must know the type of each term, the variables it couples, the domains of these variables, and whether the term appears in the objective or in a constraint. We therefore represent each formulation state as a tripartite graph with nodes for variables, symbolic terms, and objective or constraint positions. This graph keeps symbolic terms as explicit decision targets and links them to their variables and formulation positions. A graph neural encoder turns this representation into the state used by the policy.

The second ingredient is an executable action system for relaxation. We do not ask the policy model to invent algebraic rewrites. Instead, we implement known relaxation principles as symbolic transformations with explicit applicability rules. Each action rewrites the formulation only when its validity conditions are satisfied, and then updates the admissible action set. This turns relaxation theory into a composable action space for learning. The action system enforces bound validity, while the policy learns where to apply admissible actions and how to order them.
The policy is trained through solver feedback. The reward favors tighter lower bounds and penalizes formulation growth from auxiliary variables and constraints.

We establish theoretical guarantees for the composition of heterogeneous relaxation actions. Under explicit domain, occurrence-context, and shared-representation conditions, we prove that any finite action sequence preserves valid bound semantics, independently of policy quality. We further derive computable objective and constraint discrepancy bounds by propagating local errors through mixed quadratic and fractional representations. Together with original-domain checks and a certified relaxation lower bound, these bounds yield sufficient conditions for feasible recovery and an optimality-gap certificate.

Across 241 held-out QCQPs, 15 held-out fractional instances, and application-derived beamforming and sensor network localization problems, the learned policy constructs mixed trajectories that apply McCormick relaxations, semidefinite lifting, QCR, and integrality relaxation to different parts of a formulation. These trajectories tighten lightweight relaxations and approach semidefinite bounds without committing to a full semidefinite construction.

The proposed AutoConvexRelax provides a general learning framework for convex relaxation design in nonconvex optimization. By separating the learned construction process from the operator library, the framework is not tied to the QCQP and fractional instantiations studied here. Extending the library with suitable validity-checked operators allows the same construction and learning mechanism to address broader classes of nonconvex problems.

\section{Related Work}\label{sec_related_work}
\subsection{Classical relaxations and formulation design}
McCormick envelopes, semidefinite lifting, moment relaxations, and reformulation--linearization techniques provide established constructions for valid bounds \citep{mccormick1976computability,luo2010semidefinite,lasserre2001global,parrilo2003semidefinite,sherali1990hierarchy,anstreicher2008semidefinite}. Quadratic convex reformulation exploits quadratic structure through convex under-estimation \citep{billionnet2009improving}. These mechanisms offer different strength--size trade-offs. General-purpose solvers automate envelopes, cuts, and node relaxations \citep{sahinidis1996baron,belotti2013mixed}, while formulation construction still requires choices about operator placement, combination, and ordering. Such choices depend on term structure, variable domains, and objective or constraint context.

\subsection{Learning within optimization and relaxation schemes}
Learning-based optimization guides branching \citep{khalil2016learning,gasse2019exact}, cut selection \citep{paulus2022learning}, and solution heuristics \citep{nair2021solving,xiong2024neuralqp}, improving decisions within a supplied formulation \citep{bengio2021machine}. These methods learn how to solve a given model, with the formulation serving as their starting point.

Several methods directly learn relaxation decisions. \citet{dedeoglu2026learning} select between LP and SDP relaxations for an entire QCQP, adapting the family choice while retaining predefined candidate constructions. \citet{kannan2025strong} learn partitioning for piecewise McCormick relaxations, refining envelopes within that scheme. \citet{baltean2018scoring} score sparse cuts for semidefinite outer approximation, adapting cut selection within a prescribed construction. \citet{bonami2022classifier} decide when to linearize mixed-integer quadratic formulations inside a solver. Their learned decisions concern family selection, partition refinement, cut selection, or linearization within specified relaxation schemes and solver workflows.

\subsection{LLM-generated relaxations}
LLM-based methods generate relaxations during inference. AutoCO \citep{liu2026autoco} relaxes constraints for heuristic search, serving a different purpose from bounding the original optimum. NC2C \citep{peng2026nc2c} generates convexified models without establishing their optima as valid bounds. The dual-agent method \citep{kim2026relaxations} evaluates generated tightening constraints through another LLM and human review, then certifies the numerical bound by duality. Generated models and constraints therefore require output-specific verification; evaluations are consumed within each inference-time search.

\subsection{Analysis}\label{sec_related_analysis}
The methods reviewed above automate aspects of optimization, but leave relaxation construction partly prescribed or dependent on output-specific verification. Classical approaches provide relaxation mechanisms whose placement and combination require formulation design. Learning-based approaches improve the solution process \citep{khalil2016learning,gasse2019exact,nair2021solving,bengio2021machine,paulus2022learning,xiong2024neuralqp} or select and refine relaxations within prescribed schemes \citep{baltean2018scoring,bonami2022classifier,dedeoglu2026learning,kannan2025strong}. These objectives do not encompass constructing problem-specific relaxations by combining different mechanisms across heterogeneous terms. LLM-based approaches generate models or constraints at inference time, but target heuristic search \citep{liu2026autoco}, convexification \citep{peng2026nc2c}, or generation followed by verification \citep{kim2026relaxations}. The remaining question is whether learning can go beyond isolated decisions within prescribed schemes and generate--then--verify search to make relaxation design a validity-preserving and reusable construction process.

AutoConvexRelax addresses this question by making problem-specific relaxation construction itself the learning objective. It moves beyond selecting a relaxation family or tuning a fixed scheme to learning how established relaxation mechanisms should be combined and organized for a given formulation. This enables automated relaxation design that pursues improved bound--cost trade-offs while preserving valid bound semantics.

\section{The Proposed AutoConvexRelax Method}\label{sec_method}
The AutoConvexRelax method proposed in this paper treats convex relaxation design as a sequence of validity-preserving symbolic transformations. A state is the current symbolic formulation derived from the original nonconvex problem: initially the original formulation, and later a formulation modified by previous relaxation actions. An action specifies which term to relax and which admissible operator to apply. Applying an action rewrites the formulation and defines the next state. A successful trajectory ends with a convex surrogate, which is then solved to evaluate its bound quality and formulation cost.

Fig.~\ref{fig:overview} summarizes the pipeline. First, the symbolic problem formulation is encoded as a tripartite graph with nodes for variables, terms, and objective or constraint positions. This graph keeps each nonconvex term as an explicit decision target and records its modeling context. It lets the policy distinguish terms with similar coefficients but different variables, domains, algebraic roles, or constraint positions.
\begin{figure}[htbp]
\centering
\includegraphics[width=1.0\linewidth]{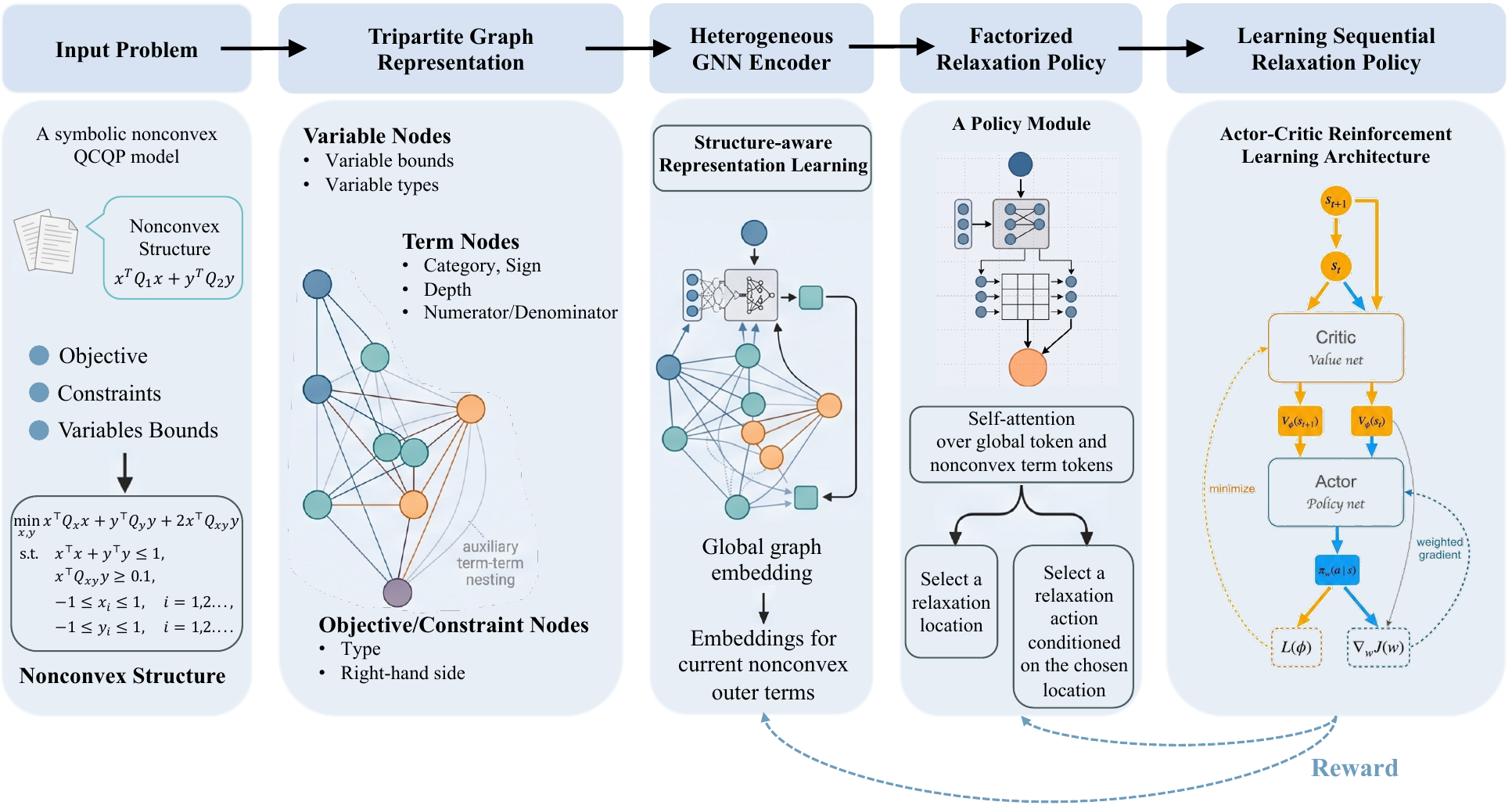}
\caption{Overview of the AutoConvexRelax method proposed in this paper. A symbolic nonconvex problem is encoded as a tripartite graph. A factorized policy selects a relaxation location and an admissible relaxation operator. The resulting valid transformations construct a convex surrogate whose bound quality and formulation cost are evaluated by a solver.}
\label{fig:overview}
\end{figure}

Second, the policy uses this representation to select a nonconvex target term and then chooses a relaxation operator conditioned on that target. This factorization matches the design task: a useful relaxation depends not only on the operator itself, but also on where it is applied and what previous rewrites have changed; the same operator may be useful for one term and unnecessary for another.

Third, validity is enforced by the action system. The policy does not invent relaxation constraints. It chooses only from admissible actions whose applicability conditions are satisfied at the current formulation state. Each accepted rewrite is therefore a valid relaxation step. Learning chooses and orders valid transformations; relaxation validity is preserved by construction.

\subsection{Problem setting and framework applicability}\label{sec_problem_setting}
We consider a symbolic optimization problem in the canonical minimization form
\begin{equation}
\begin{aligned}
\mathcal P:\quad \inf_{x\in\mathcal D}\quad & f_0(x)\\
\mathrm{s.t.}\quad & f_i(x)\le 0,\quad i=1,\ldots,m,\\
& Ax=b,
\end{aligned}
\label{eq:general_problem}
\end{equation}
where $\mathcal D$ specifies variable bounds and any integer or binary restrictions, and each $f_i$ has a finite, traceable symbolic representation. The symbolic engine and operator library support structurally admissible rewrites with explicit domain and context conditions. A finite admissible convexification trajectory produces a convex relaxation. This construction framework accommodates different expression classes through suitable operators.

In the quadratic instantiation,
\begin{equation}
f_i(x)=x^\top Q_i x+c_i^\top x+d_i,\qquad i=0,\ldots,m,
\label{eq:qcqp_instantiation}
\end{equation}
with symmetric, possibly indefinite $Q_i$. McCormick and box-based QCR actions require finite valid bounds. The fractional instantiation handles quadratic numerators over affine denominators, with certified denominator intervals excluding zero and valid auxiliary bounds. Benchmark and application formulations are given in Appendices~\ref{sec_supp_instances} and~\ref{sec_supp_applications}.

\subsection{Symbolic formulation states}
We represent a nonconvex problem by its symbolic formulation, not by a fixed coefficient-level description. The formulation is first put into a canonical form by converting the objective to minimization form and writing each inequality constraint as $f_i(x)\le 0$, with all terms collected on the left. Right-hand-side constants are absorbed into the constraint functions when needed.

To enable term-level relaxation decisions, each objective or constraint-side function is decomposed into traceable symbolic terms,
\begin{equation}
f_i(x) = \sum_{k=1}^{K_i} \tau_{i,k}(x), \ i=0,1,\cdots,m,
\label{eq:term_decomp}
\end{equation}
where $f_0$ is the objective function, $f_i$, $i\ge 1$, are the left-hand-side constraint functions after canonicalization, $K_i$ is the number of terms in $f_i$, and $\tau_{i,k}(x)$ is the $k$th symbolic term in $f_i$. Rules to obtain this decomposition, including expansion, collection of like terms, symmetrization of quadratic forms, and treatment of affine or already-convex terms, are given in Appendix~\ref{sec_supp_canonical}. Each term is retained throughout the relaxation trajectory and serves as a candidate location for relaxation actions. For each term, we store its algebraic type, sign, nesting depth, participating variables, and position in the formulation.

At relaxation step $t$, let $\mathcal{F}_t$ denote the current symbolic formulation after all previous rewrites. A valid action rewrites a selected term, updates the formulation, and refreshes the admissible action set. Relaxation design is therefore represented as a sequence of symbolic formulation states, with each term kept tied to its role in the objective or constraints.

\subsection{Tripartite graph representation of states}
Given state $\mathcal{F}_t$, we construct an augmented tripartite graph
\begin{equation}
\widetilde{G}_t = (\mathcal{V}_x, \mathcal{V}_\tau, \mathcal{V}_{\mathrm{pos}},
\mathcal{E}_{x\tau}, \mathcal{E}_{\tau\mathrm{pos}}, \mathcal{E}_{\tau\tau}^{\mathrm{aux}}),
\label{eq:trip_graph}
\end{equation}
where $\mathcal{V}_x$, $\mathcal{V}_\tau$, and $\mathcal{V}_{\mathrm{pos}}$ contain variable, term, and objective/constraint-position nodes, respectively. The corresponding nodes are denoted by $v_{x,j}$ for variable $x_j$, $v_{\tau,i,k}$ for symbolic term $\tau_{i,k}(x)$ in Eq.~\eqref{eq:term_decomp}, and $v_{\mathrm{pos},i}$ for the position of $f_i$, with $i=0$ for the objective and $i\ge 1$ for the constraints. $\mathcal{E}_{x\tau}$ and $\mathcal{E}_{\tau\mathrm{pos}}$ contain variable--term and term--position edges. $(v_{x,j},v_{\tau,i,k}) \in \mathcal{E}_{x\tau}$ links variable $x_j$ to term $\tau_{i,k}(x)$; $(v_{\tau,i,k},v_{\mathrm{pos},i})\in\mathcal{E}_{\tau\mathrm{pos}}$ links that term to the position of $f_i$. The graph therefore keeps each relaxation target tied to both its variables and its formulation context. The tripartite backbone is augmented by auxiliary $\mathcal{E}_{\tau\tau}^{\mathrm{aux}}$. It connects nested term nodes and passes local decomposition information not captured by variable membership alone (Fig.~\ref{fig:tripartite_graph}).
\begin{figure}
\centering
\includegraphics[width=0.6\linewidth]{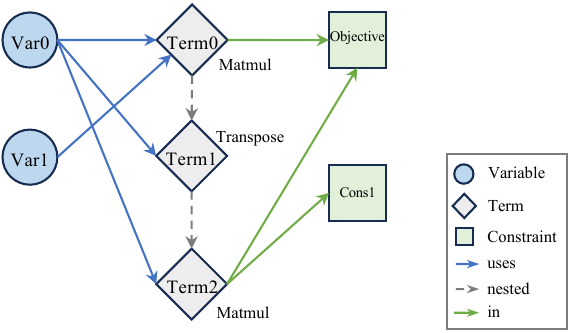}
\caption{Augmented tripartite graph representation of a symbolic formulation. Variable, term, and objective/constraint-position nodes encode the structural dependencies used for relaxation decisions.}
\label{fig:tripartite_graph}
\end{figure}

Each node carries a type-specific feature vector
\begin{equation}
h_v^{(0)} =
\begin{cases}
\phi_x(v), & v \in \mathcal{V}_x,\\
\phi_\tau(v), & v \in \mathcal{V}_\tau,\\
\phi_{\mathrm{pos}}(v), & v \in \mathcal{V}_{\mathrm{pos}},
\end{cases}
\label{eq:node_feat}
\end{equation}
where $\phi_x$ encodes variable bounds and type; $\phi_\tau$ encodes term category, sign, and structural depth; $\phi_{\mathrm{pos}}$ encodes position context, including the objective/constraint indicator and any retained constraint-sense or right-hand-side metadata. Variable--term edge attributes record local participation information, including numerator or denominator membership in fractional structures and the enclosing outer-term identifier when relevant.

A heterogeneous message-passing encoder maps the graph to contextual embeddings over the tripartite backbone and auxiliary relation. A relation-wise mean-aggregation update at layer $l+1$ takes the form
\begin{equation}
h_v^{(l+1)} =
\sigma \left(
W_{\mathrm{self}}^{(l)} h_v^{(l)}
+
\sum_{\kappa \in \mathcal{K}}
\sum_{u \in \mathcal{N}_\kappa(v)}
\frac{1}{|\mathcal{N}_\kappa(v)|}
\left(
W_{\kappa,h}^{(l)} h_u^{(l)}
+
W_{\kappa,e}^{(l)} e_{uv}
\right)
\right),
\label{eq:gnn_update}
\end{equation}where $v$ is a node in $\mathcal{V}_x\cup\mathcal{V}_\tau\cup\mathcal{V}_{\mathrm{pos}}$; $h_v^{(l)}$ is its layer-$l$ embedding; $\mathcal{K}=\{x\leftrightarrow\tau; \tau\leftrightarrow\mathrm{pos}; \tau\to\tau\}$ contains the variable--term, term--position, and auxiliary directed term--term relations; $\mathcal{N}_\kappa(v)$ denotes neighbors of $v$ under relation $\kappa$; $e_{uv}$ is the edge attribute vector, set to zero when no edge attribute is available; $W_{\mathrm{self}}^{(l)}$, $W_{\kappa,h}^{(l)}$, and $W_{\kappa,e}^{(l)}$ are learnable relation-specific transformations; $\sigma$ is a nonlinear activation.

After $L$ message-passing layers, the final embedding of node $v$ is
\begin{equation}
z_v = h_v^{(L)}.
\label{eq:node_embedding}
\end{equation}
A graph-level state embedding is obtained by pooling node embeddings separately within each node type and concatenating the summaries
\begin{equation}
z_{\widetilde{G}} = \operatorname{Concat}\Big(
\operatorname{Pool}(\{z_v : v \in \mathcal{V}_x\}),
\operatorname{Pool}(\{z_v : v \in \mathcal{V}_\tau\}),
\operatorname{Pool}(\{z_v : v \in \mathcal{V}_{\mathrm{pos}}\})
\Big).
\label{eq:graph_pool}
\end{equation}

The encoder supplies graph-level and candidate-term embeddings to the policy heads. The implementation uses relation-specific attention and graph convolutions, detailed in Appendix~\ref{sec_supp_architecture}.

\subsection{Valid relaxation actions}
A relaxation action is a validity-checked symbolic rewrite. Let $\mathcal X_0$ denote the feasible set of Eq.~\eqref{eq:general_problem}. A formulation $\mathcal F_t$ with auxiliary variables $\omega_t$, feasible set $\mathcal X_t$, and objective $f_{0,t}$ is a valid relaxation if
\begin{equation}
\forall x\in\mathcal X_0,\quad \exists\omega_t:\quad
(x,\omega_t)\in\mathcal X_t,\quad
f_{0,t}(x,\omega_t)\le f_0(x).
\label{eq:relaxation_validity}
\end{equation}
Thus, original feasible points remain representable and their minimization objective values do not increase. A convex $\mathcal F_t$ has a relaxation objective whose infimum bounds the original optimum from below. This definition applies to the complete lifted formulation, including auxiliary constraints and variable bounds, rather than to convexity of an isolated replacement expression.

At step $t$, an action is
\begin{equation}
a_t = (\ell_t,o_t), \quad
\mathrm{with} \ \ell_t\in\mathcal{L}_t=\Lambda_t\cup\{\ell_{\mathrm{glob}}\}, \ o_t\in\mathcal{O}.
\label{eq:relaxation_action}
\end{equation}
Here, $\Lambda_t={\ell_1,\ell_2,\cdots,\ell_{n_t}}$ are candidate nonconvex term locations in the current formulation; $\ell_{\mathrm{glob}}$ is the global location for formulation-level actions; $\ell_t$ is the selected target location; $o_t$ is a relaxation operator from predefined library $\mathcal{O}$. 

A target location is usually a candidate term node; formulation-level post-processing actions use a special global location. Table~\ref{tab:action_conditions} summarizes the operator library $\mathcal O$ and its main applicability conditions. Reformulation--linearization technique (RLT) constraints are generated within SDP and cut actions, rather than exposed as a separate policy action.

Each action has rule-based applicability conditions on the target structure, variable domains, and formulation context. The admissible action set is
\begin{equation}
\mathcal{A}_t=
\{(\ell,o): \ell\in\mathcal{L}_t,\ o\in\mathcal{O},\
\operatorname{Valid}(\ell,o;\mathcal{F}_t)=1\},
\label{eq:admissible_action_set}
\end{equation}
where $\mathrm{Valid}(\ell,o;\mathcal{F}_t)$ is a rule-based validity indicator. Actions outside $\mathcal{A}_t$ are masked out before policy selection.

Applying an admissible action updates the symbolic formulation deterministically:
\begin{equation}
\mathcal{F}_{t+1}=\operatorname{Rewrite}(\mathcal{F}_t,a_t).
\label{eq:rewrite_update}
\end{equation}
The $\operatorname{Rewrite}$ replaces the selected term by the surrogate generated by $o_t$ and inserts any required auxiliary variables or constraints. Term-level rewriting continues until the formulation becomes convex. The rollout may then execute a bounded number of post-convex global actions before termination.

The action library uses classical McCormick and SDP constructions \citep{mccormick1976computability,luo2010semidefinite}. The SDP action and experimental reference combine PSD lifting with supported strengthening constraints. The QCR action uses a box-based spectral under-estimator with one chord per concave eigendirection. Full constructions and their validity arguments are given together in Appendix~\ref{sec_supp_single_step}.

\begin{table}[t]
\centering
\caption{Action library and main applicability conditions. Full validity conditions are given in Appendix~\ref{sec_supp_single_step}.}
\label{tab:action_conditions}
\small
\renewcommand{\arraystretch}{1.12}
\begin{tabular}{@{}p{0.18\linewidth}p{0.34\linewidth}p{0.42\linewidth}@{}}
\toprule
Action & Construction & Main conditions \\
\midrule
Integrality relaxation & Enlarge discrete domains to intervals & Integer or binary variables; retained bounds \\
Fractional removal & Lift the ratio and its defining products & Nonzero denominator; valid auxiliary bounds \\
McCormick & Add product and square envelopes & Supported factors with finite valid bounds \\
SDP & Lift quadratic blocks with strengthening & Supported scalar quadratic forms; compatible domains \\
QCR & Under-estimate concave eigendirections & Finite bounds; correct direction at every changed occurrence \\
Bound tightening & Propagate implied bounds & Supported affine or positive square-sum constraints \\
Cut generation & Add or refresh envelopes and cuts & Valid domains, curvature, and shared representations \\
\bottomrule
\end{tabular}
\end{table}

\subsection{Policy over valid actions}
The policy selects a target location and then an operator conditioned on that target:
\begin{equation}
\pi_\theta(a_t\mid\mathcal{F}_t)
=
\pi_\theta^{(\ell)}(i_t\mid\mathcal{F}_t)
\pi_\theta^{(o)}(o_t\mid i_t,\mathcal{F}_t),
\quad
\mathrm{with} \ a_t=(\ell_t,o_t).
\label{eq:factor_policy}
\end{equation}
The location head uses graph-level and candidate-term embeddings contextualized by multi-head self-attention \citep{vaswani2017attention}. The operator head combines the selected token with its domain and structural attributes. While nonconvex terms remain, the location head selects terms admitting at least one operator in $\mathcal A_t$; after convexification, it selects the global location for bound tightening or cut generation. The conditional operator head is supported on
\begin{equation}
\mathcal{M}_t^{(o)}(i_t)=\{o\in\mathcal{O}:(\ell_t,o)\in\mathcal{A}_t\},
\label{eq:operator_mask}
\end{equation}
Both heads use masked softmax distributions, so actions outside $\mathcal A_t$ receive zero probability. This factorization combines term-specific decisions with the composition guarantee of Proposition~\ref{prop:composition}. Both heads are sampled during training and decoded greedily during evaluation. Token construction, attention, logits, and masks are detailed in Appendix~\ref{sec_supp_architecture}.

\subsection{Policy optimization with solver feedback}
The policy generates a trajectory $\zeta$ from $\mathcal F_0$, terminating at the action horizon or after the budget for post-convex global actions is exhausted. Training maximizes
\begin{equation}
J(\theta)
=
\mathbb{E}_{\zeta\sim\pi_\theta}
\left[
\sum_{t=0}^{T-1}\gamma^t r_t
+
\mathbb{I}\{\mathcal{F}_T\ \mathrm{is\ convex}\}
r_T^{\mathrm{solver}}
\right].
\label{eq:rl_objective}
\end{equation}
Here $r_t$ rewards convexification progress and structural simplification while penalizing formulation growth and ineffective rewrites; $r_T^{\mathrm{solver}}$ rewards normalized bound improvement over the cached root relaxation. Admissibility preserves validity, while solver feedback guides the choice among valid trajectories.

We optimize with proximal policy optimization \citep{schulman2017proximal}, updating the graph encoder, attention module, and policy heads jointly. Completed convex surrogates are exported to conic or semidefinite solvers, whose objectives and formulation statistics supply terminal feedback and evaluation metrics. Reward equations, PPO loss, and numerical settings are given in Appendices~\ref{sec_supp_reward}--\ref{sec_supp_training}.

\subsection{Theoretical guarantees for learned relaxation construction}\label{sec_theory}
The analysis establishes three complementary guarantees: policy-independent composition validity, unified error propagation across heterogeneous representations, and feasible recovery with an optimality-gap certificate. Together, they connect learned construction decisions to the mathematical properties of the resulting relaxation.

\begin{proposition}[Context-consistent composition]\label{prop:composition}
Let $\mathcal X_0\ne\emptyset$ and $p_0^\star=\inf_{x\in\mathcal X_0}f_0(x)$. Consider a finite trajectory satisfying the domain, occurrence-context, and representation-consistency conditions in Appendix~\ref{sec_supp_validity}. Then every $x\in\mathcal X_0$ has a feasible extension $E_T(x)\in\mathcal X_T$ with $f_{0,T}(E_T(x))\le f_0(x)$, and
\begin{equation}
L_T:=\inf_{z\in\mathcal X_T}f_{0,T}(z)\le p_0^\star.
\label{eq:composition_bound}
\end{equation}
This holds for any policy and any finite stopping time. The terminal formulation is a convex relaxation if it also satisfies the structural conditions in Appendix~\ref{sec_supp_trajectory_proof}.
\end{proposition}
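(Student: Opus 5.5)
The plan is induction on the trajectory length, building the extension maps $E_t:\mathcal X_0\to\mathcal X_t$ one step at a time. For $t=0$ take $E_0=\mathrm{id}$, since $\mathcal F_0$ is the original formulation, so $\mathcal X_0$ is its own feasible set and $f_{0,0}=f_0$. The induction hypothesis at step $t$ is:
\begin{equation*}
E_t(x)\in\mathcal X_t \quad\text{and}\quad f_{0,t}(E_t(x))\le f_0(x)\qquad\text{for all } x\in\mathcal X_0.
\end{equation*}
For the step, fix an admissible action $a_t=(\ell_t,o_t)\in\mathcal A_t$ and the rewrite $\mathcal F_{t+1}=\operatorname{Rewrite}(\mathcal F_t,a_t)$. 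The goal is a single-step lifting map $\Phi_t$ with two properties: $\Phi_t$ sends every point of $\mathcal X_t$ reachable as $E_t(x)$ into $\mathcal X_{t+1}$, and it does not increase the objective. We then set $E_{t+1}=\Phi_t\circ E_t$.

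The map $\Phi_t$ keeps all current variables and assigns the new auxiliary variables their ``exact'' values. For McCormick and SDP this means $w_{ij}=x_ix_j$ or $W=xx^\top$. For fractional removal it means the ratio value and its defining products, which are well defined because the denominator interval excludes zero. For integrality relaxation and bound tightening no new variables are needed. We then check the operators case by case against the single-step validity arguments in Appendix~\ref{sec_supp_single_step}.

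\begin{itemize}
\item Envelope and lifting constraints hold at exact values because the bounds are valid.
\item QCR under-estimators satisfy $\underline q(x)\le q(x)$ on the box.
\item Enlarged domains contain the discrete ones.
\item Bound tightening only removes points violating implied constraints.
\end{itemize}

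The step from ``the replacement expression is valid'' to ``the whole formulation is valid'' is where occurrence-context enters. A term replaced by an under-estimator is safe only where it appears with a sign such that decreasing it relaxes the problem: in the objective, or on the left of a $\le 0$ constraint, with a positive coefficient. An over-estimate would be needed elsewhere. The occurrence-context condition of Appendix~\ref{sec_supp_validity} is exactly that each changed occurrence receives an estimator of the correct direction, or an exact lifted representation. Under it, each constraint $f_{i,t+1}(\Phi_t(z))\le f_{i,t}(z)\le 0$ holds, and likewise for the objective.

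The main obstacle I expect is representation consistency across steps. A later action may reuse an auxiliary variable introduced earlier, for example a cut refreshing $w_{ij}$, or an SDP block containing entries that McCormick already lifted. If so, $\Phi_t$ must assign the same value that $E_t$ already fixed; otherwise the composed extension is ill defined or violates a coupling constraint. To handle this I would strengthen the induction hypothesis. Besides the two properties above, $E_t(x)$ should assign every auxiliary variable its defining symbolic value in $x$ (product, ratio, or matrix entry), and the shared-representation condition should guarantee that any new constraint involving existing auxiliaries is valid at these defining values. With that invariant, every operator's new constraints are evaluated at exact values, and the single-step arguments apply unchanged. The domain condition ensures that the bounds used by later actions remain valid for these exact values, since bound tightening propagates only implied bounds. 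This keeps the invariant intact.

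Finally, given $f_{0,T}(E_T(x))\le f_0(x)$ for all $x\in\mathcal X_0$, take the infimum. This gives
\begin{equation*}
L_T\le\inf_{x\in\mathcal X_0}f_{0,T}(E_T(x))\le\inf_{x\in\mathcal X_0}f_0(x)=p_0^\star,
\end{equation*}
and $\mathcal X_0\neq\emptyset$ makes these infima meaningful. The argument never uses how $a_t$ was chosen, only that $a_t\in\mathcal A_t$ and that $T$ is finite. It therefore holds for any policy and any stopping time. Convexity of the terminal formulation is a separate structural check from Appendix~\ref{sec_supp_trajectory_proof} and plays no role in the bound.
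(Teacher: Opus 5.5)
Your proposal is correct and follows essentially the same route as the paper: a pathwise induction from $E_0(x)=x$ maintaining three invariants, with one-sided replacements such as QCR handled by the occurrence-sensitive substitution lemma and $L_T\le p_0^\star$ obtained by taking infima. Those invariants are recipe-valued auxiliaries (your strengthened invariant, which the paper proves as its simultaneous exact assignment lemma), satisfied constraints and bounds, and objective at most $f_0(x)$. The only piece you leave implicit is the final convexity claim, which the paper settles in one line by noting that the terminal structural conditions make $\mathcal X_T$ an intersection of convex sets with a convex objective.
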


\begin{theorem}[Error propagation for mixed representations]\label{thm_error}
For a terminal point $\bar z=(\bar x,\bar\omega)$, let $\phi_v$ and $\bar y_v$ be the exact operations and represented values on an acyclic graph. At each non-input node, assume finite residual bounds $\eta_v\ge|\bar y_v-\phi_v(\bar y_{\operatorname{pa}(v)})|$ and sensitivities $L_{vu}\ge0$ on boxes containing both exact and represented parent values; division boxes exclude zero. Set $\eta_v=e_v=0$ at original inputs, whose exact and represented values coincide, and recursively define
\begin{equation}
e_v=\eta_v+\sum_{u\in\operatorname{pa}(v)}L_{vu}e_u,
\qquad v\ \text{non-input}.
\label{eq:error_recursion}
\end{equation}
For output nodes $v_i$ representing $f_i$ by $\widetilde f_i$, with $\widetilde f_0=f_{0,T}$,
\begin{equation}
|f_i(\bar x)-\widetilde f_i(\bar z)|\le E_i:=e_{v_i},
\qquad i=0,\ldots,m.
\label{eq:output_error}
\end{equation}
Each $\widetilde f_i\le0$, $i\ge1$, is a retained descendant of its original inequality.
\end{theorem}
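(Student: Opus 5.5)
We prove Theorem~\ref{thm_error} by induction along a topological order of the acyclic graph. For every node $v$ we compare the exact value $y_v^\ast$, obtained by evaluating the exact operations $\phi_v$ on the original input $\bar x$, with the represented value $\bar y_v$ carried at the terminal point $\bar z$. The inductive claim is
\begin{equation*}
|\bar y_v-y_v^\ast|\le e_v\qquad\text{for every node } v.
\end{equation*}
At the output nodes $v_i$ we have $y_{v_i}^\ast=f_i(\bar x)$ and $\bar y_{v_i}=\widetilde f_i(\bar z)$, so the claim gives~\eqref{eq:output_error} directly.

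\textbf{Base case.} At an original input, the exact and represented values coincide by hypothesis. Hence the error is $0=e_v$.

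\textbf{Inductive step.} Fix a non-input node $v$ and assume the claim for all parents $u\in\operatorname{pa}(v)$; acyclicity guarantees these are already handled. Split the error with the triangle inequality:
\begin{equation*}
|\bar y_v-y_v^\ast|\le|\bar y_v-\phi_v(\bar y_{\operatorname{pa}(v)})|+|\phi_v(\bar y_{\operatorname{pa}(v)})-\phi_v(y^\ast_{\operatorname{pa}(v)})|.
\end{equation*}
The first term is at most $\eta_v$ by the residual assumption. For the second term, interpret $L_{vu}$ as coordinatewise Lipschitz constants valid on a box containing both parent vectors. Change one parent coordinate at a time, giving a telescoping sum of one-coordinate differences. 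Each intermediate vector has every coordinate equal to either the exact or the represented value, so it lies in the same product box. The second term is therefore at most $\sum_u L_{vu}|\bar y_u-y_u^\ast|\le\sum_u L_{vu}e_u$, using the induction hypothesis and $L_{vu}\ge0$. Adding the two bounds gives exactly the recursion~\eqref{eq:error_recursion}.

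\textbf{Main obstacle: well-definedness of the exact values.} The delicate point is ensuring that the exact values are defined and that the sensitivity bound applies. Division nodes are the concern. Both the exact denominator and the represented denominator must lie in the box on which $L_{vu}$ is certified. That box excludes zero by assumption, so $\phi_v$ is finite and Lipschitz along the whole path between the two parent vectors. I would state explicitly that the boxes are products of intervals. This makes them convex in each coordinate, so the telescoping segments stay inside them. A mean-value or Lipschitz bound then applies to each coordinate (for instance, $|1/a-1/b|\le|a-b|/\min(|a|,|b|)^2$).

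\textbf{Final assertion.} The last sentence of the theorem is bookkeeping from the construction. Output nodes are the retained descendants of the original position nodes. Rewrites change only how terms are represented and never delete constraint positions, so each $\widetilde f_i\le0$ traces back to the original $f_i\le0$.
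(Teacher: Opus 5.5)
Your argument is the paper's proof: a single topological induction in which the triangle inequality splits $|\bar y_v-y_v^\ast|$ into the residual $\eta_v$ plus the sensitivity term $\sum_u L_{vu}e_u$; your telescoping over a product box only derives the summed coordinatewise Lipschitz condition \eqref{eq:coordinate_lipschitz}, which the paper takes as the defining property of $L_{vu}$. One small correction: the theorem's last sentence is a stipulation identifying which constraint $\widetilde f_i\le0$ is meant, not something to prove, and the paper's recovery proof explicitly allows original constraints without retained descendants (checked directly), so you should not claim that rewrites never delete constraint positions.
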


\begin{corollary}[Feasible recovery and optimality certification]\label{cor_recovery}
Under Proposition~\ref{prop:composition}, let $\bar z\in\mathcal X_T$ and let $E_i$ satisfy Theorem~\ref{thm_error}. If $\bar x$ satisfies the original domains, $A\bar x=b$, and $\widetilde f_i(\bar z)+E_i\le0$ for every original inequality, then $\bar x\in\mathcal X_0$. For any certified $d_T\le L_T$,
\begin{equation}
0\le f_0(\bar x)-p_0^\star
\le f_0(\bar x)-d_T
\le E_0+f_{0,T}(\bar z)-d_T.
\label{eq:recovery_gap}
\end{equation}
If $\bar z$ attains $L_T$ and all output discrepancies vanish, these domain and equality checks suffice for global optimality.
\end{corollary}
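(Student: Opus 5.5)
The corollary follows from Theorem~\ref{thm_error} and Proposition~\ref{prop:composition} by a short chain of inequalities. We handle feasibility first, then the gap bound, then the global-optimality clause.

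\emph{Feasibility.} Fix an original inequality $i\ge1$. Theorem~\ref{thm_error} gives $|f_i(\bar x)-\widetilde f_i(\bar z)|\le E_i$, so
\[
f_i(\bar x)\le \widetilde f_i(\bar z)+E_i\le 0
\]
by hypothesis. The clause of Theorem~\ref{thm_error} stating that each $\widetilde f_i\le 0$ is a retained descendant of its original inequality is what ensures every original inequality has a well-defined output node $v_i$. The bound $E_i$ therefore refers to the correct function, and no original constraint is lost along the trajectory. Combined with the assumed domain membership and $A\bar x=b$, this gives $\bar x\in\mathcal X_0$.

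\emph{Gap bound.} Since $\bar x\in\mathcal X_0$, we have $p_0^\star\le f_0(\bar x)$, which gives the left inequality. Proposition~\ref{prop:composition} (its hypotheses are assumed) gives $L_T\le p_0^\star$. A certified $d_T\le L_T$ then yields $d_T\le p_0^\star$, so $f_0(\bar x)-p_0^\star\le f_0(\bar x)-d_T$. For the last inequality, apply Theorem~\ref{thm_error} with $i=0$ and $\widetilde f_0=f_{0,T}$ to get $f_0(\bar x)\le f_{0,T}(\bar z)+E_0$; subtracting $d_T$ finishes the chain. Note that $p_0^\star$ is finite, because $\mathcal X_0\ne\emptyset$ and $p_0^\star\le f_0(\bar x)<\infty$, and bounded below by $d_T$, so all differences are well defined.

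\emph{Global optimality.} If all $E_i=0$, the feasibility test reduces to $\widetilde f_i(\bar z)\le0$, which holds automatically because $\bar z\in\mathcal X_T$ and each $\widetilde f_i\le 0$ is a retained constraint of $\mathcal F_T$. So only the domain and equality checks are needed to conclude $\bar x\in\mathcal X_0$. If moreover $\bar z$ attains $L_T$, take $d_T=L_T$; the upper bound becomes $0+f_{0,T}(\bar z)-L_T=0$, hence $f_0(\bar x)=p_0^\star$.

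The main obstacle is the bookkeeping step rather than any inequality. One must justify that $\widetilde f_i(\bar z)\le0$ is genuinely among the constraints defining $\mathcal X_T$, in the exact represented form that Theorem~\ref{thm_error} measures. A certified $d_T$ also requires $L_T$ finite, so a remark on finiteness may be needed.
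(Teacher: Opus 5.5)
Your proposal is correct and follows the paper's proof essentially step for step: the margin test comes from Theorem~\ref{thm_error}, the gap chain from Proposition~\ref{prop:composition} together with $d_T\le L_T$, and the optimality clause from terminal feasibility plus zero discrepancy. The differences are cosmetic: the paper closes the last clause directly as $f_0(\bar x)=f_{0,T}(\bar z)=L_T\le p_0^\star\le f_0(\bar x)$, deliberately without setting $d_T=L_T$ (your substitution uses only $L_T\le p_0^\star$, so it is equivalent), and it reads ``output discrepancies vanish'' as $f_i(\bar x)=\widetilde f_i(\bar z)$ rather than $E_i=0$, a case your two lines handle verbatim.
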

These guarantees allow heterogeneous relaxation mechanisms to be composed with valid bound semantics, quantify their joint representation errors, and support recovery in the original problem. The composition proof is given in Appendix~\ref{sec_supp_validity}, mixed-error bounds and their proof in Appendix~\ref{sec_supp_error}, and the recovery proof in Appendix~\ref{sec_supp_recovery}. Admissibility checks and certificate evaluation are detailed in Appendix~\ref{sec_supp_implementation}.

\section{Results}\label{sec_results}
The Results progress from aggregate performance through structure-level adaptation, term-level inspection, and transfer to fractional problems, culminating in generalization across real-world applications. We first evaluate the bound--cost trade-off of convex surrogates obtained on held-out QCQP benchmarks. We then group the benchmarks by dominant nonconvexity to test whether the relaxations adapt to problem structure. Individual trajectories are further inspected to show how term-level rewrites are arranged inside a formulation. Fractional instances next test whether the learned policy transfers beyond polynomial formulations. Application-derived problems finally test generalization beyond the benchmark setting.
\subsection{Experimental settings}\label{sec_experimental_settings}
\paragraph{Benchmarks}
The experiments evaluate the general construction methodology through QCQP and fractional instantiations. The synthetic QCQP benchmarks span mixed, dense, separable, bilinear, binary-coupled, concave-objective, and nonconvex-constraint structures. We report results on 241 held-out QCQPs and 15 held-out fractional instances. Detailed instance construction is given in Appendix~\ref{sec_supp_instances}.

\paragraph{Comparison methods and selection rationale}
The comparison methods are selected to evaluate \emph{relaxation-construction capabilities} and the \emph{contribution of learned decisions}. As analyzed in Section~\ref{sec_related_work}, the related learning methods make decisions within prescribed schemes or perform inference-time generation and verification; they do not implement the \emph{joint symbolic location--operator construction process} studied here.

McCormick and strengthened SDP serve as \emph{classical reference formulations with different strength--size trade-offs}. They are constructed by the same symbolic engine, with common preprocessing. McCormick provides a lightweight envelope-based reference. Strengthened SDP provides a stronger matrix-based reference. These references test whether learned composition \emph{strengthens lightweight relaxations} and \emph{approaches stronger bounds with selective formulation growth}.

Random and Heuristic isolate the \emph{value of learning under the same action library and legality masks}. Random samples legal locations and operators. Heuristic follows a fixed term order, applies McCormick to approximately separable terms, and selects SDP for sufficiently large and densely coupled quadratic terms. These baselines test whether the supplied operators and validity checks alone explain the results, and whether learned location--operator decisions improve on fixed construction rules.

Gurobi and SCIP root relaxations provide \emph{solver-level starting bounds before branching}. They test whether the generated surrogate strengthens the formulation relative to these starting points, without attributing gains to subsequent branch-and-bound search. Taken together, the comparisons evaluate classical reference strength, the contribution of learned construction, and solver-level starting bounds.

Training settings, detailed baseline rules, evaluation metrics, and the solver protocol are provided in Appendices~\ref{sec_supp_training}, \ref{sec_supp_baselines}, \ref{sec_supp_metrics}, and~\ref{sec_supp_protocol}, respectively.

\subsection{Learned trajectories improve relaxation bound--cost trade-off}
We first test whether learned relaxation trajectories improve the bound--cost trade-off. The policy is trained on synthetic QCQP benchmarks and tested on held-out instances. The benchmarks span mixed, dense, separable, bilinear, binary-coupled, concave-objective, and nonconvex-constraint structures. Existing learning-based methods do not provide direct counterparts because they refine predefined schemes rather than generate term--operator rewrite trajectories. The comparison therefore uses standard references with different bound strength and formulation size: Gurobi and SCIP root relaxations provide solver-level references; McCormick envelopes provide a lightweight handcrafted endpoint; semidefinite relaxation (SDP) with bound-derived strengthening provides a strong but larger handcrafted endpoint. We also compare with two non-learned baselines over the same action library: the Random baseline selects legal locations and operators at random; the Heuristic baseline follows a fixed term order and selects operators by term type, size, and coupling density. Performance is measured by lower-bound quality and formulation growth. Details of benchmark construction and experimental settings are provided in Appendices~\ref{sec_supp_instances} and~\ref{sec_supp_settings}.

The learned trajectories reach convex terminal formulations on all 241 held-out instances within the rollout and improve solver-level starting bounds (Table~\ref{tab:solver_root}, Fig.~\ref{fig:root_bound_improvement}). Using the Gurobi root relaxation as the reference, the learned surrogates improve the mean lower bound by 21.88\% and give tighter bounds on 53.9\% of the test instances. Using the SCIP root relaxation as the reference, the mean improvement is 15.37\%, with tighter bounds on 48.5\% of the instances. These root relaxations are the convex starting points used by general-purpose solvers before branching. The gains therefore show that the policy strengthens the formulation itself, rather than relying on downstream tree search. The ranked distributions further show that these gains occur across a substantial subset of instances, not only in a few outliers.
\begin{table}[t]
\centering
\caption{Bound improvement of learned trajectories over solver root relaxations. The reported lower-bound improvements are computed as instance-wise normalized differences and averaged over 241 held-out instances. Tighter/tie/weaker reports the fraction of instances on which the learned trajectory gives a tighter, equal, or weaker lower bound than the corresponding solver root relaxation.}
\label{tab:solver_root}
\renewcommand{\arraystretch}{1.15}
\begin{tabular*}{\columnwidth}{@{\extracolsep{\fill}}ccc@{}}
\toprule
\makecell[c]{Solver root relaxation} &
\makecell[c]{Lower-bound improvement (\%)} &
\makecell[c]{Tighter / Tie / Weaker (\%)} \\
\midrule
Gurobi & +21.88 & 53.9 / 29.5 / 16.6 \\
SCIP   & +15.37 & 48.5 / 14.0 / 37.4 \\
\bottomrule
\end{tabular*}
\end{table}

\begin{figure}[t]
    \centering
    \begin{minipage}{0.47\textwidth}
        \centering
        \textbf{a}\par\vspace{2pt}
        \includegraphics[width=\linewidth]{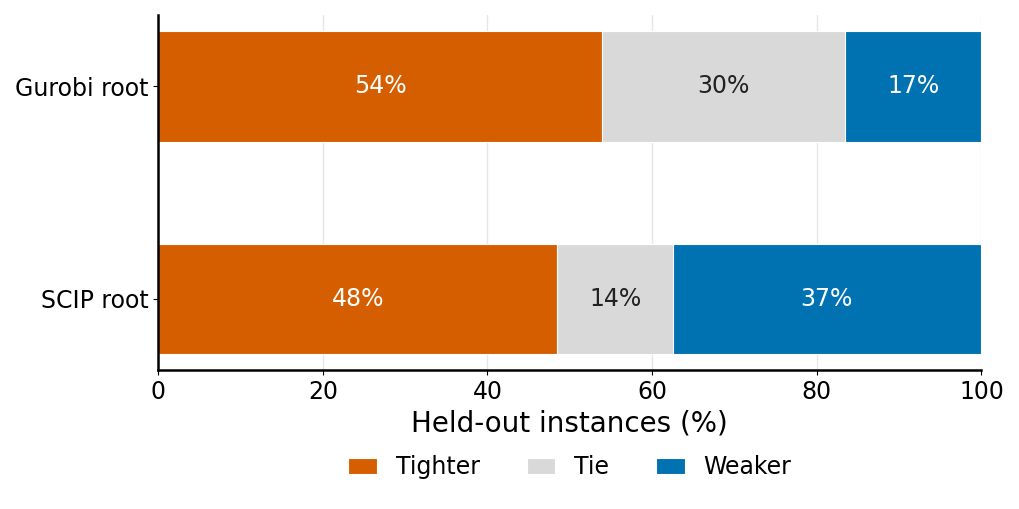}
    \end{minipage}
    \hfill
    \begin{minipage}{0.47\textwidth}
        \centering
        \textbf{b}\par\vspace{2pt}
        \includegraphics[width=\linewidth]{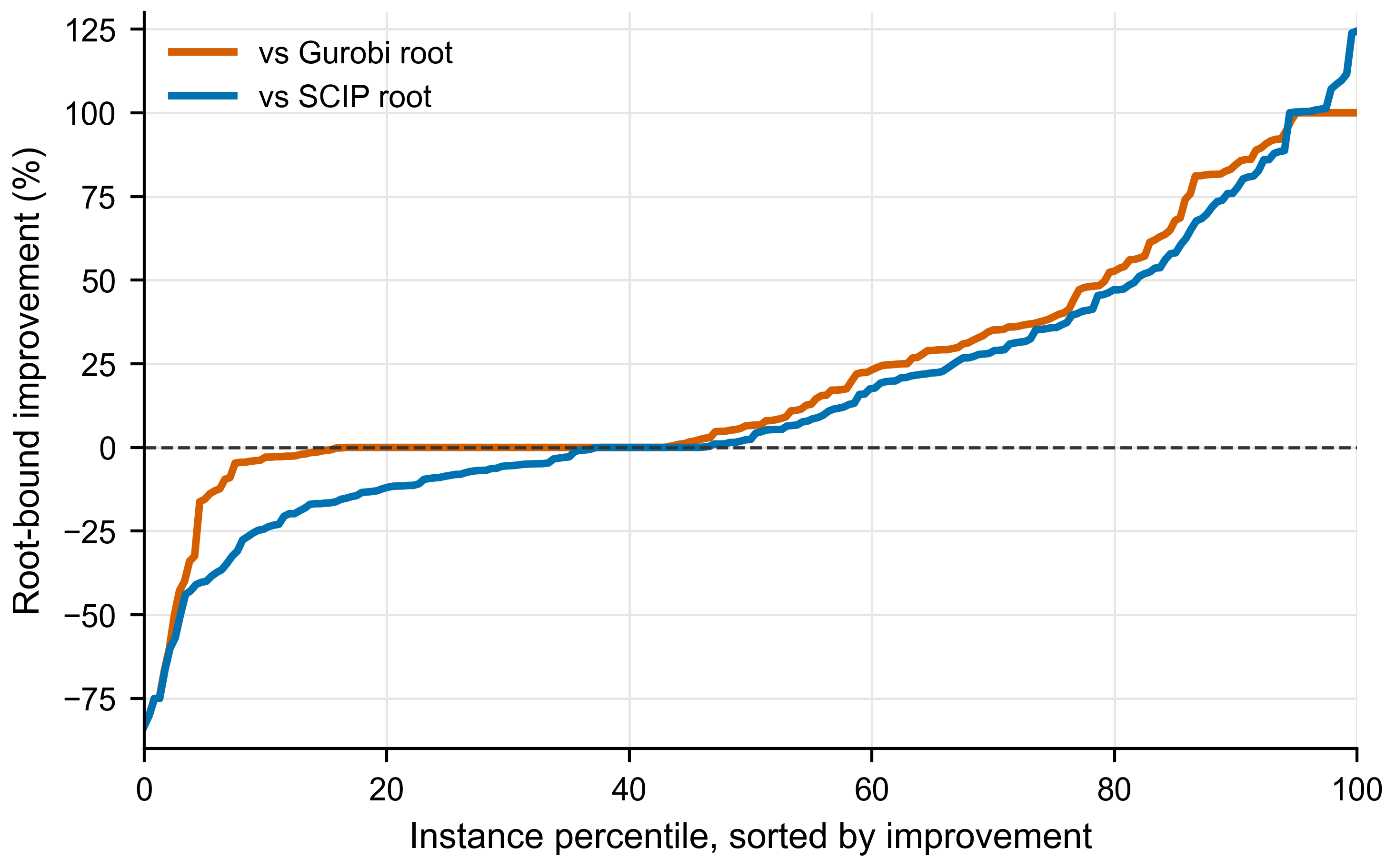}
    \end{minipage}
\caption{\textbf{a}, Pairwise comparison with Gurobi and SCIP root relaxations. Each bar reports the percentage of test instances on which the learned relaxation gives a tighter, equal, or weaker lower bound than the corresponding solver root relaxation. \textbf{b}, Ranked lower-bound improvement over solver root relaxations. 
    Test instances are sorted by the percentage improvement of the learned lower bound over the corresponding solver root bound. Positive values indicate tighter learned relaxations.}
    \label{fig:root_bound_improvement}
\end{figure}

The learned trajectories further recover much of the strength of SDP without paying its full formulation cost (Table~\ref{tab:main_tradeoff}). McCormick is small but loose. The learned trajectories improve the mean lower bound over McCormick by 13.52\% and give tighter bounds on 47.3\% of instances, although this requires larger formulations. This confirms that lightweight envelopes leave substantial bound quality unused. The comparison with SDP is more revealing. The learned trajectories have only a 3.70\% mean lower-bound gap to SDP, with ties on 74.3\% of instances. Yet they use 34.90\% fewer variables, 29.52\% fewer constraints, and 28.79\% fewer nonzeros than SDP. Thus, most of the SDP-level bound can be recovered without constructing the full SDP relaxation. The learned trajectories make formulation growth selective and effective, rather than simply making the relaxation larger.
\begin{table}[t]
\centering
\caption{Bound quality and formulation growth of learned trajectories over reference and baseline relaxations. $\Delta$Vars, $\Delta$Cons, and $\Delta$NNZ are formulation-growth metrics. They report relative changes in the numbers of variables, constraints, and nonzeros of the learned formulation with respect to the reference formulation. Positive values indicate a larger learned formulation; negative values indicate smaller.}
\label{tab:main_tradeoff}
\renewcommand{\arraystretch}{1.15}
\begin{tabular*}{\textwidth}{@{\extracolsep{\fill}}cccccc@{}}
\toprule
\makecell[c]{Reference \& baseline \\relaxation} &
\makecell[c]{Lower-bound\\improvement\\ (\%)} &
\makecell[c]{Tighter / Tie / Weaker\\(\%)} &
\makecell[c]{$\Delta$Vars\\(\%)} &
\makecell[c]{$\Delta$Cons\\(\%)} &
\makecell[c]{$\Delta$NNZ\\(\%)} \\
\midrule
McCormick & +13.52 & 47.3 / 44.4 / 8.3 & +190.97 & +274.88 & +252.71 \\
SDP       & -3.70 & 1.2 / 74.3 / 24.5 & -34.90 & -29.52 & -28.79 \\
Random    & +17.68 & 68.5 / 27.4 / 4.1 & +112.87 & +211.42 & +191.06 \\
Heuristic & +13.80 & 49.8 / 43.6 / 6.6 & +190.97 & +310.32 & +272.61 \\
\bottomrule
\end{tabular*}
\end{table}

Finally, the learned trajectories outperform both non-learned baselines under the same action library and legality masks (Table~\ref{tab:main_tradeoff}). Compared with Random, they improve the mean lower bound by 17.68\% and are tighter or tied on 95.9\% of the instances. This result shows that the masks only enforce validity and do not determine trajectory quality. Compared with Heuristic, learned trajectories improve the mean lower bound by 13.80\% and are tighter or tied on 93.4\% of the instances. Thus, Heuristic's fixed term--operator rules and processing orders do not recover the learned behavior. The gains therefore come from learning which locations and operators to select and in what order.

\subsection{Relaxation choices adapt to problem structure}
We next test whether the policy adapts its relaxation choices to QCQP structures and whether this adaptation yields useful bound--cost trade-offs. Held-out instances are grouped by their dominant source of nonconvexity. Within each group, accepted relaxation actions reveal whether the policy prefers different choices for different structures. Group-level bound improvement and formulation growth then show whether these structure-dependent choices lead to favorable trade-offs.

Fig.~\ref{fig:family_action_composition} reveals clear structural adaptation. The learned trajectories do not simply default to the strongest operator. SDP lifting appears in all families, but its intensity is modulated by problem structure. For concave quadratic objectives, SDP actions dominate, matching the need to handle nonconvex curvature through lifting. For mixed-structure QCQPs, dense quadratic terms, diagonal or separable quadratics, and binary-coupled quadratics, the policy combines SDP lifting with McCormick relaxations, using local envelope construction together with stronger lifted reformulations. Integrality relaxation is concentrated in families with discrete or domain-induced structure, whereas QCR is invoked only in selected regimes. These operator allocations are the key evidence of adaptation. The policy composes lifting, envelope construction, domain relaxation, and reformulation according to the source of nonconvexity, rather than committing to one fixed relaxation template.
\begin{figure}
    \centering
    \includegraphics[width=0.6\linewidth]{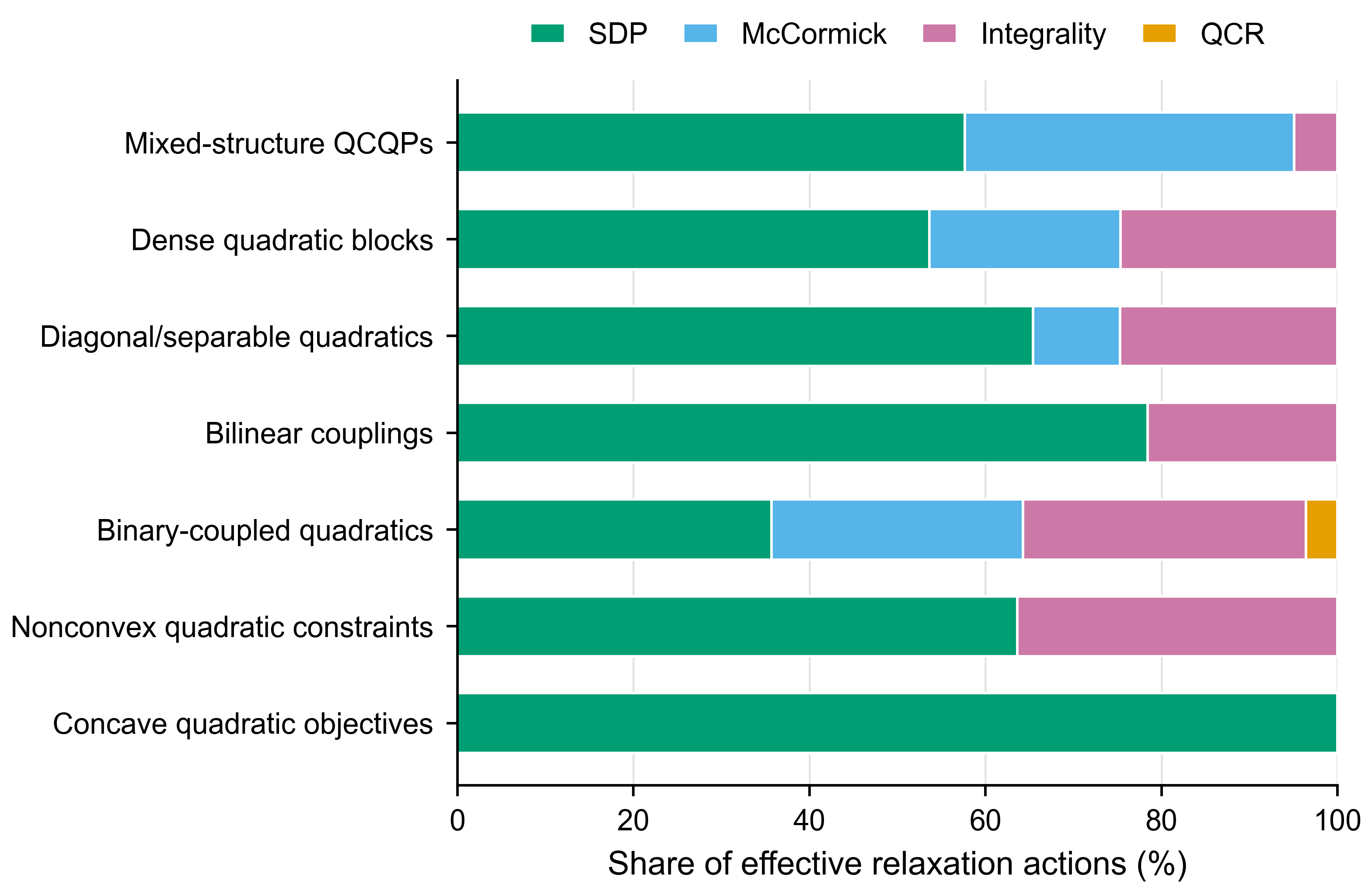}
    \caption{Relaxation action composition in learned trajectories across QCQP structural families. Only accepted actions that changed the formulation are counted.}
    \label{fig:family_action_composition}
\end{figure}

Further structure-dependent bound and cost analyses are provided in Appendix~\ref{sec_supp_structure_details}.

\subsection{Learned trajectories place and sequence relaxation mechanisms}
We further inspect individual trajectories to understand how structure adaptation is implemented within a formulation. The analysis focuses on two degrees of freedom that are invisible in aggregate tables. The first is placement: which nonconvex term receives which relaxation mechanism. The second is sequencing: how these term-level rewrites are ordered to form the final convex surrogate. Three held-out QCQP instances with heterogeneous nonconvex structure are selected for inspection (Table~\ref{tab:representative_trajectories}). \texttt{H\_MIXED\_5\_n3\_4\_615} illustrates placement, where different quadratic terms receive different operators. \texttt{E\_EXP\_P3\_n4\_945} illustrates sequencing, where domain relaxation, envelope construction, and lifting are applied in order. \texttt{H\_MIXED\_1\_n4\_929} is analyzed in Appendix~\ref{sec_supp_trajectory3}, as a lower-return case that still exposes term-level selectivity.
\begin{table}[t]
\centering
\caption{Lower bounds of held-out instances for term-level trajectory inspection. Root LB, AutoConvexRelax LB, McC. LB, and SDP LB are short for lower bounds of solver root relaxation, learned trajectory, McCormick, and SDP, respectively.}
\label{tab:representative_trajectories}
\small
\renewcommand{\arraystretch}{1.18}
\setlength{\tabcolsep}{4pt}
\begin{tabular*}{\linewidth}{@{\extracolsep{\fill}}ccccc@{}}
\toprule
\makecell[c]{Instance} &
\makecell[c]{Root LB} &
\makecell[c]{AutoConvexRelax LB} &
\makecell[c]{McC. LB} &
\makecell[c]{SDP LB} \\
\midrule
\makecell[c]{\texttt{H\_MIXED\_5 \_n3\_4\_615}} &
$-29.73$ &
$-11.60$ &
$-29.73$ &
$-11.30$ \\
\makecell[c]{\texttt{E\_EXP\_P3 \_n4\_945}} &
$-246.44$ &
$-34.28$ &
$-35.06$ &
$-34.28$ \\
\makecell[c]{\texttt{H\_MIXED\_1 \_n4\_929}} &
$-84.40$ &
$-84.82$ &
$-85.65$ &
$-84.40$ \\
\bottomrule
\end{tabular*}
\end{table}

Detailed term-level trajectory analyses are provided in Appendix~\ref{sec_supp_trajectory_analysis}.

\subsection{The learned policy transfers to fractional QCQPs}
We next test whether relaxation knowledge learned on standard QCQPs can be transferred to fractional formulations. The policy is initialized from the standard QCQP checkpoint, fine-tuned on a fractional training split, and evaluated on held-out fractional instances. Each fractional instance combines affine denominators with fixed coefficients and nonconvex quadratic terms in the numerator and elsewhere in the formulation. Details of the instance construction are given in Appendix~\ref{sec_supp_fractional}.

The fractional results confirm that the learned policy is not tied to standard QCQPs (Table~\ref{tab:fraction_hard_extension}). All learned trajectories produce valid convex formulations with finite lower bounds. Relative to McCormick and Heuristic, they improve the mean lower bound by 11.11\% and are never weaker. The learned bounds are tighter on 8 instances and tied on 7, with formulation growth of only 3.83--10.13\%. Relative to Random, the mean lower-bound improvement is 9.05\%, with tighter bounds on 5 instances, ties on 7, and weaker bounds on 3. Against SDP, learned trajectories tie on 7 instances and have a mean lower-bound gap of 7.52\%, while producing formulations that are 20.74--28.32\% smaller. The extension therefore preserves the bound--cost pattern observed on standard QCQPs: stronger than lightweight relaxations and smaller than full SDP. More broadly, it shows that learned relaxation design can operate across symbolic expression classes rather than remain tied to a single algebraic form.
\begin{table}[t]
\centering
\caption{Bound quality and formulation growth of learned trajectories over reference and baseline relaxations on 15 held-out nonconvex fractional instances.}
\label{tab:fraction_hard_extension}
\small
\renewcommand{\arraystretch}{1.15}
\setlength{\tabcolsep}{5pt}
\begin{tabular*}{\linewidth}{@{\extracolsep{\fill}}cccccc@{}}
\toprule
\makecell[c]{Reference \& baseline\\relaxation} &
\makecell[c]{Lower-bound\\improvement\\(\%)} &
\makecell[c]{Tighter / Tie / Weaker\\(\%)} &
\makecell[c]{$\Delta$Vars\\(\%)} &
\makecell[c]{$\Delta$Cons\\(\%)} &
\makecell[c]{$\Delta$NNZ\\(\%)} \\
\midrule
McCormick & +11.11 & 53.3 / 46.7 / 0.0 & +8.82 & +10.13 & +3.83 \\
SDP & -7.52 & 0.0 / 46.7 / 53.3 & -28.32 & -21.89 & -20.74 \\
Random & +9.05 & 33.3 / 46.7 / 20.0 & +9.37 & +24.25 & +23.81 \\
Heuristic & +11.11 & 53.3 / 46.7 / 0.0 & +8.82 & +10.13 & +3.83 \\
\bottomrule
\end{tabular*}
\end{table}

\subsection{Application-derived QCQPs validate generalization beyond benchmarks}
We finally evaluate whether the learned policy generalizes beyond benchmarks. The standard-QCQP policy is frozen and applied to the application problems without retraining, fine-tuning, or task-specific reward adjustment. Two representative problems, multicast beamforming \citep{di2020hybrid} and bounded-noise sensor network localization \citep{biswas2006semidefinite,so2006theory}, are used. They introduce nonconvexity through physical signal coupling and geometric distance constraints, rather than controlled benchmark construction. Beamforming leads to dense quadratic forms arising from transmit power, interference, and quality-of-service requirements. Localization produces coupled quadratic constraints from noisy Euclidean distance measurements. Together, they provide two complementary stress tests for relaxation design under real application structure. Full formulations, parameters, and reference solution settings are provided in Appendices~\ref{sec_supp_beamforming}, \ref{sec_supp_localization}, and~\ref{sec_supp_protocol}.

Table~\ref{tab:real_applications_instance_breakdown} reports results on four beamforming and four localization instances. On beamforming, the learned relaxation achieves near-exact performance. Across B1--B4, the relative gap to the reference solution is on the order of $10^{-6}$ to $10^{-5}$, and the learned values are indistinguishable from the SDP relaxation. This indicates strong transfer from benchmark QCQPs to application-derived dense quadratic structures. Despite arising from transmit power, interference, and signal-quality constraints, beamforming induces dense quadratic coupling that differs from benchmark construction. The learned policy preserves SDP-level tightness under a clear domain shift, without retraining or any task-specific tuning.
\begin{table}[t]
\centering
\caption{Bound quality on application-derived QCQPs. B1--B4 are the maximization beamforming instances; the reported values are upper bounds. S1--S4 are the minimization localization instances; the reported values are lower bounds. Reference values are the best objective values returned by the solvers; AutoConvexRelax values are convex surrogate objectives induced by the learned policy; AutoConvexRelax gap reports the relative error to the reference.}
\label{tab:real_applications_instance_breakdown}
\centering
\scriptsize
\renewcommand{\arraystretch}{1.10}
\setlength{\tabcolsep}{4pt}
\begin{tabular*}{\textwidth}{@{\extracolsep{\fill}}lccccc@{}}
\toprule
\makecell[c]{Inst.} &
\makecell[c]{Reference} &
\makecell[c]{AutoConvexRelax} &
\makecell[c]{McCormick} &
\makecell[c]{SDP} &
\makecell[c]{AutoConvexRelax gap} \\
\midrule
B1 & 8.57 & 8.57 & 8.58 & 8.57 & $3.38 \times 10^{-6}$ \\
B2 & 2.76 & 2.76 & 3.23 & 2.76 & $3.68 \times 10^{-5}$ \\
B3 & 1.35 & 1.35 & 1.35 & 1.35 & $1.02 \times 10^{-6}$ \\
B4 & 0.34 & 0.34 & 0.34 & 0.34 & $2.91 \times 10^{-6}$ \\
\midrule
S1 & 6.99 & 6.81 & 6.25 & 6.81 & $2.71 \times 10^{-2}$ \\
S2 & 7.00 & 6.86 & 6.23 & 6.86 & $2.00 \times 10^{-2}$ \\
S3 & 6.94 & 6.78 & 6.28 & 6.78 & $2.29 \times 10^{-2}$ \\
S4 & 7.04 & 6.87 & 6.36 & 6.87 & $2.47 \times 10^{-2}$ \\
\bottomrule
\end{tabular*}
\end{table}

The sensor network localization instances are more challenging. Across S1--S4, the learned relaxation reduces the McCormick gap from 10.22\% to 2.37\% on average, while consistently improving all instances. Unlike beamforming, which is dominated by dense quadratic coupling, localization is driven by geometric distance constraints, leading to a different nonconvex structure. McCormick relaxations capture part of this geometry but remain insufficient to fully tighten the coupled constraints. The learned policy provides a consistently tighter relaxation than McCormick and matches SDP at the reported precision. This result shows that the learned policy generalizes beyond algebraic coupling to geometry-induced nonconvexity, while maintaining systematic improvement over classical envelope relaxations. 

\section{Conclusion}\label{sec_discussion}
AutoConvexRelax makes convex relaxation design an executable, learnable construction process. A tripartite symbolic representation and factorized policy select where and how to apply classical relaxation operators, composing them into problem-specific trajectories. The resulting method adapts relaxation strength to individual terms and to changes introduced by earlier rewrites.

The theoretical analysis establishes policy-independent validity for admissible compositions, unified mixed-error bounds, and sufficient conditions for feasible recovery and optimality certification. Experiments show that learned construction strengthens lightweight relaxations and approaches SDP bounds with smaller formulations. Structure-level and term-level analyses explain how selective placement and sequencing achieve this trade-off; fractional and application-derived instances demonstrate transfer beyond standard QCQPs.

Future work will expand the validity-checked operator library to broader factorable, polynomial, and mixed-integer nonlinear problems, and incorporate downstream solver feedback such as numerical stability, tree size, and wall-clock time.

\section*{Funding}
This study is supported by the National Natural Science Foundation of China (NSFC) under Grant No.~72401122, Guangdong Basic and Applied Basic Research Foundation under Grants No.~2024A1515012241, 2025A1515110439, and 2021A1515110024, Guangdong University Young Innovative Talents Program Project under Grant No.~2024KQNCX050, Shenzhen Fundamental Research Program under Grant No.~JCYJ20200109141235597, and the NSFC under Grant No.~61761136008.

\appendix
\makeatletter
% Prefix only top-level appendix headings; keep references and equation numbers unchanged.
\renewcommand{\@seccntformat}[1]{%
  \ifstrequal{#1}{section}{Appendix~}{}%
  \csname the#1\endcsname.\quad}
\makeatother
\renewcommand{\thesection}{\Alph{section}}
\numberwithin{equation}{section}
\numberwithin{table}{section}
\numberwithin{figure}{section}

\section{Canonicalization rules for symbolic terms}\label{sec_supp_canonical}
The symbolic formulation used by the action system is obtained from the original algebraic expression by deterministic canonicalization. This preprocessing defines the candidate term locations used in the action representation in the main text.

First, expressions are expanded into sums of scalar terms. Products of affine expressions are distributed so that quadratic contributions are exposed explicitly, while constants are folded into the constant offset. Second, like terms are collected. Monomials with the same variable support and degree are merged by summing their coefficients; zero-coefficient terms are removed. This step avoids presenting duplicate algebraic copies of the same nonconvex structure to the policy.

Quadratic forms are represented in a symmetric convention. A quadratic expression is collected as \(x^\top Qx+q^\top x+c\), and nonsymmetric coefficient matrices are replaced by their symmetric part \((Q+Q^\top)/2\), since \(x^\top Qx=x^\top ((Q+Q^\top)/2)x\). Cross terms are therefore stored once in the canonical quadratic block rather than as order-dependent products.

Every canonical term is retained as a traceable candidate location, as described in the main paper. Candidate locations are distinguished from currently admissible actions: affine terms and terms already convex in their objective or constraint context remain in the formulation and graph representation, but need not be relaxed. At each step, the nonconvex locations and their admissible operators are selected from this representation according to the action conditions in the main paper.

\section{Context-consistent composition: constructions and proof}\label{sec_supp_validity}
We prove Proposition~\ref{prop:composition} in exact arithmetic for the quadratic and fractional library, with $\mathcal X_0\ne\emptyset$. Construction bounds contain all recursively assigned original feasible points; recipes, substitutions, tightening, and cuts satisfy the conditions below. Preprocessing preserves these properties.

\subsection{Semantic recipes and occurrence contexts}
A \emph{recipe} specifies an auxiliary's original-feasible-point value, e.g., $w=uv$, $Z=uu^\top$, or $\lambda=p(u)/d(u)$, without constraining every relaxed point. Recipes form an acyclic graph; denominator intervals exclude zero. Reuse and equality links require identical defining functions and parents. Normalization, aliases, copying, and rewrites preserve these meanings; rewritten defining constraints undergo the same validity checks.

\begin{lemma}[Occurrence-sensitive substitution]\label{lem_context}
On a valid domain, an exact representation $\widehat q(E(x))=q(E(x))$ preserves witness values at every well-defined occurrence. An under-estimator $\widehat q\le q$ safely replaces $q$ in a minimization objective or canonical $\le0$ constraint $cq+r$ when $c\ge0$, but generally not when $c<0$. More generally, the occurrence context must be nondecreasing in the replaced argument. This condition alone does not justify one-sided substitutions in equalities or PSD entries.
\end{lemma}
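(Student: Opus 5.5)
The lemma has three parts, and I would prove them in order: exact substitution, one-sided substitution with a sign condition, and the failure cases. Throughout, fix an original feasible point $x\in\mathcal X_0$ and its witness $E(x)$. Here $E(x)$ is the extension that assigns every auxiliary the value given by its recipe; for example $w=uv$, $Z=uu^\top$, $\lambda=p(u)/d(u)$. The recipe graph is acyclic and the denominator intervals exclude zero. Hence $E(x)$ is well defined by evaluating recipes in topological order, and every occurrence evaluated at $E(x)$ has a finite value.

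\emph{Exact representations.} Suppose $\widehat q(E(x))=q(E(x))$. Replacing $q$ by $\widehat q$ at any occurrence leaves the value of the enclosing expression at $E(x)$ unchanged, whatever the context. So every objective value, inequality, equality, or PSD entry evaluated at the witness is literally the same as before. The same witness therefore stays feasible with the same objective. I would prove this by a one-line induction on the expression tree of the enclosing function, using well-definedness at each node.

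\emph{One-sided substitution.} Let $\widehat q\le q$ on the valid domain. In the canonical form $cq+r$ with $c\ge 0$, we get $c\,\widehat q(E(x))+r\le c\,q(E(x))+r$. There are two cases:
\begin{itemize}
\item in a $\le 0$ constraint, feasibility of $E(x)$ is preserved;
\item in a minimization objective, the value does not increase, which is exactly condition \eqref{eq:relaxation_validity}.
\end{itemize}
For the general statement, write the occurrence as $g(q,\text{rest})$ with $g$ nondecreasing in its first argument. Then $g(\widehat q,\cdot)\le g(q,\cdot)$ at $E(x)$ by monotonicity, and the same two consequences follow. The step I would watch is that monotonicity must hold on a box containing both $\widehat q(E(x))$ and $q(E(x))$. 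That is why the lemma says ``on a valid domain'': the construction bounds contain all recursively assigned original feasible points.

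\emph{Negative results by counterexample.} If $c<0$, take $q=x^2$ on $[0,1]$, the under-estimator $\widehat q=0$, and the constraint $1/2-x^2\le 0$. The point $x=1$ is feasible. After substitution the constraint becomes $1/2\le 0$, so $x=1$ is lost, and with it the bound semantics. For an equality $q+r=0$, replacing $q$ by a strict under-estimator breaks the equality at the witness, so feasibility fails. For a PSD entry, lowering one diagonal entry of a PSD matrix such as $\begin{pmatrix}1&1\\1&1\end{pmatrix}$ destroys positive semidefiniteness. Each of these witnesses is immediate.

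The main obstacle is formalizing ``occurrence context'' for nested occurrences: the same $q$ may appear inside another auxiliary's recipe. I would treat the context as the composition of all ancestor operations up to the output node. Its monotonicity then follows from the product of the sign conditions along the path. For this I would require every ancestor to be either monotone in that argument or exact, so the argument reduces to the two cases above applied node by node.
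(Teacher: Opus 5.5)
Your proposal is correct and follows essentially the same route as the paper's proof: the sign computation $c(\widehat q-q)\le 0$ for $c\ge 0$, order preservation under nondecreasing occurrence contexts, and the observation that equalities need exact values and PSD constraints need matrix order rather than scalar monotonicity. Your explicit counterexamples play the role of the paper's separate ``Why the occurrence check is necessary'' example. Your nested-context remark goes beyond what the lemma requires; if you keep it, note that when subexpressions are shared, every path from $q$ to the output must carry a nonnegative sign, not just a single path.
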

\begin{proof}
Exact substitution preserves witness values. For one-sided affine substitution,
\[
(c\widehat q+r)-(cq+r)=c(\widehat q-q)\le0\quad(c\ge0),
\]
preserving objective and inequality directions. For $c<0$, the sign reverses unless the estimate is exact. Nondecreasing contexts preserve order; equalities require equal values and PSD constraints require matrix order. Splitting equalities still requires both directions.
\end{proof}

\paragraph{Why the occurrence check is necessary}
For $q(x)=-x_1^2-2x_2^2$ on $[-1,1]^2$, the problem $\min q(x)$ subject to $-q(x)\le0$ has the unique feasible point $0$. The under-estimator $\widehat q=-3$ is valid in the objective, but its substitution into the constraint yields $3\le0$, destroying feasibility.

\begin{lemma}[Simultaneous exact assignment]\label{lem_witness}
For an original feasible $x$, acyclic recipes with well-defined operations and consistent reuse admit a simultaneous topological assignment satisfying all equalities between representations of the same recipe. Fresh nodes leave previous assignments unchanged.
\end{lemma}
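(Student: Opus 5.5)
The proof is by induction along a topological order of the recipe graph, which is acyclic by the assumptions on recipes. Fix an original feasible $x\in\mathcal X_0$. Enumerate the auxiliary nodes $v_1,\dots,v_N$ so that every parent of $v_k$ is either an original variable or some $v_j$ with $j<k$. Define the assignment $E(x)$ recursively. Original inputs take their values from $x$. Each auxiliary $v_k$ takes the value of its defining function, such as $uv$, $uu^\top$ or $p(u)/d(u)$, evaluated at the already assigned parent values. The operation is well defined at each step for two reasons. First, the construction bounds contain all recursively assigned original feasible points, so the parent values lie in the boxes on which the recipe was certified. Second, denominator intervals exclude zero, so divisions make sense. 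Hence every recipe $w=\phi(\mathrm{pa}(w))$ holds exactly at $E(x)$.

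Next I show consistency of reuse. Suppose two representations $w$ and $w'$ are declared to be the same recipe, through reuse, an equality link, an alias or a copy. By the standing convention they have identical defining functions and identical parents. I argue by strong induction on topological depth that their assigned values coincide. The parents are the same nodes, or nodes already proven equal at smaller depth. So $\phi$ is applied to equal arguments and returns equal outputs. Normalization and rewrites preserve the meanings of recipes, so this identification is stable across the trajectory. Every equality between representations of the same recipe is therefore satisfied simultaneously.

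Finally, the claim about fresh nodes. Suppose a rewrite introduces a new auxiliary node. Then all of its parents existed before the rewrite, so the node can be appended at the end of the topological order. The recursive definition of earlier nodes never refers to later nodes, so the values previously assigned are unchanged. The previous assignment extends to the new one.

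The main obstacle is the consistency step when equality links are added after the nodes were created, for example when a McCormick product $w=x_ix_j$ is later tied to an entry $Z_{ij}$ of an SDP lift. Here one must check that the identification really is between identical recipes with identical parents, and not merely between numerically equal values at some points. I would handle this by indexing recipes canonically by the pair consisting of the defining function and its parent tuple, after symmetrization. Links are then equivalence classes of this canonical key, and the depth induction goes through on these classes.
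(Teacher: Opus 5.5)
Your proposal is correct and follows the same route as the paper's proof: evaluate recipes from $x$ in topological order, use the fact that identical defining functions applied to identical (or already-equal) parents return identical values, and note that appended nodes never enter the definitions of earlier ones. Your depth induction and canonical recipe keys spell out what the paper states in one line. One minor fix: a single rewrite can create several fresh nodes at once (e.g.\ $\lambda$ together with the product proxies for $\lambda u_j$ in fractional removal), so instead of asserting that all parents of a fresh node predate the rewrite, append the fresh nodes after the existing ones in their own topological order.
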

\begin{proof}
Starting from $x$, evaluate recipes on assigned parents in topological order. Acyclicity ensures well-defined evaluation; identical recipes and parents yield identical values, unchanged by reuse or fresh extensions. Compatible proxies therefore satisfy $w_{ij}=u_i u_j=Z_{ij}$ simultaneously.
\end{proof}

\subsection{Verification of the seven action categories}\label{sec_supp_single_step}
Each action retains or extends the common assignment $E_t(x)$ of Lemma~\ref{lem_witness}, for every $x\in\mathcal X_0$. Local error bounds also require the candidate relaxation point to satisfy the construction bounds.

\paragraph{Integrality relaxation}
Replacing $\mathbb Z\cap[\ell_j,u_j]$ by $[\ell_j,u_j]$, or $\{0,1\}$ by $[0,1]$, retains the original coordinates and all other constraints without changing expression values.

\paragraph{McCormick products and squares}
For $u\in[\ell_u,u_u]$ and $v\in[\ell_v,u_v]$, replace the product equality $w=uv$ by the McCormick envelopes
\begin{equation}
\begin{aligned}
w&\ge \ell_u v+\ell_v u-\ell_u\ell_v,&\quad
w&\ge u_u v+u_v u-u_u u_v,\\
w&\le u_u v+\ell_v u-u_u\ell_v,&
w&\le \ell_u v+u_v u-\ell_u u_v.
\end{aligned}
\label{eq:mccormick_action}
\end{equation}
With $w=uv$, these follow from the nonnegative products $(u-\ell_u)(v-\ell_v)$, $(u_u-u)(u_v-v)$, $(u_u-u)(v-\ell_v)$, and $(u-\ell_u)(u_v-v)$, respectively.

Corner-product extrema bound $w$. Square proxies satisfy endpoint tangent lower bounds and the secant upper bound,
\begin{equation}
\begin{aligned}
w&\ge 2\ell_u u-\ell_u^2,\qquad w\ge 2u_u u-u_u^2,\\
w&\le(\ell_u+u_u)u-\ell_u u_u,
\end{aligned}
\label{eq:square_action}
\end{equation}
Additional tangents are valid: at $w=u^2$, $u^2-(2au-a^2)=(u-a)^2\ge0$ and $(\ell_u+u_u)u-\ell_u u_u-u^2=(u-\ell_u)(u_u-u)\ge0$. Nonnegativity and interval bounds hold. Exact affine-factor assignments extend the argument to inner products and quadratic decompositions, including signed coefficients.

\paragraph{SDP lifting and strengthening}
For a supported quadratic block $u^\top Qu$, replace it by $\operatorname{Tr}(QZ)$ and impose
\begin{equation}
\begin{bmatrix}1&u^\top\\u&Z\end{bmatrix}\succeq0.
\label{eq:sdp_action}
\end{equation}
For supported continuous scalar quadratic blocks excluding unsafe fractional structures, assign $Z=uu^\top$. Then Eq.~\eqref{eq:sdp_action} equals $[1;u][1;u]^\top\succeq0$, $\operatorname{Tr}(QZ)=u^\top Qu$, and $\operatorname{Tr}(Z)=\|u\|^2$, establishing symmetry, product bounds, McCormick constraints, and lifted norm constraints. Concatenated affine vectors satisfy the same identities. RLT validity follows below. Overlapping blocks share coordinates; linked entries require matching product recipes. Rank one is required only at the witness.

\paragraph{Single-chord spectral QCR}
Absorb the signed coefficient into symmetric $Q$ and write $Q=Q_++\sum_{\lambda_k<0}\lambda_k v_kv_k^\top$, where $Q_+\succeq0$. Put $t_k=v_k^\top u$. Interval evaluation gives $a_k\le t_k\le b_k$. Retain convex curvature and replace each concave eigendirection by its chord:
\begin{equation}
\widehat q(u)=u^\top Q_+u+
\sum_{\lambda_k<0}\lambda_k\big[(a_k+b_k)v_k^\top u-a_kb_k\big]
\le u^\top Qu.
\label{eq:qcr_action}
\end{equation}
The exact identity
\begin{equation}
q(u)-\widehat q(u)
=\sum_{\lambda_k<0}|\lambda_k|(t_k-a_k)(b_k-t_k)\ge0
\label{eq:qcr_exact_error}
\end{equation}
proves Eq.~\eqref{eq:qcr_action}; $Q_+\succeq0$ ensures convexity. Every changed occurrence must satisfy Lemma~\ref{lem_context}, preserving existing recipes. Each chord spans its complete certified interval; maxima of subinterval chords are excluded.

\paragraph{Fractional removal}
For a nonzero constant denominator, algebraic division preserves the value. Otherwise let $p\in[p_L,p_U]$ and $d\in[d_L,d_U]$, where $0\notin[d_L,d_U]$. The ratio belongs to the interval with endpoints
\begin{equation}
\lambda_L=\min_{a\in\{p_L,p_U\},\,b\in\{d_L,d_U\}}a/b,
\qquad
\lambda_U=\max_{a\in\{p_L,p_U\},\,b\in\{d_L,d_U\}}a/b.
\label{eq:ratio_interval}
\end{equation}
Affine dependence on $p$ and monotonicity in fixed-sign $d$ give these extrema. Introduce

\begin{equation}
\lambda d=p,\qquad \lambda_L\le\lambda\le\lambda_U,
\label{eq:fraction_action}
\end{equation}
and apply supported McCormick constructions to generated products. Exact assignments $\lambda=p/d$ and product proxies preserve the defining equality, fraction value, and earlier constraints. For $d=a^\top u+b$, the products are $\lambda u_j$ and those in $p$. One-sided replacements require Lemma~\ref{lem_context}; arbitrary finite big-$M$ values cannot replace missing bounds or zero-crossing intervals.

\paragraph{Bound propagation}
For $\sum_j a_jy_j\le b$, define $m_{-i}=\sum_{j\ne i}\min\{a_j\ell_j,a_ju_j\}$. Then $a_i y_i\le b-m_{-i}$ bounds $y_i$ above for $a_i>0$ and below for $a_i<0$ by $(b-m_{-i})/a_i$; process equalities in both directions. For $\sum_j a_jy_j^2\le R$ with $a_j\ge0$, subtract $s_j=\min_{[\ell_j,u_j]}y_j^2$ times $a_j$ for $j\ne i$ to obtain, when $a_i>0$,

\begin{equation}
|y_i|\le \sqrt{\frac{R-\sum_{j\ne i}a_j s_j}{a_i}}.
\label{eq:ssq_bound}
\end{equation}
The numerator is nonnegative at any witness. Under the stated expression and coefficient conditions, finite iterations of interval intersection preserve every witness.

\paragraph{Cut generation}
Refreshing envelopes with new valid bounds retains exact products. For an affine slack $g(y)=b-a^\top y\ge0$ and $y_k\in[\ell_k,u_k]$, the RLT products
\begin{equation}
g(y)(y_k-\ell_k)\ge0,
\qquad g(y)(u_k-y_k)\ge0
\label{eq:rlt_validity}
\end{equation}
hold at every witness. Replacing products by compatible proxies preserves their values. For a current constraint $h(y)\le b$ with differentiable convex $h$, the tangent inequality
\[
h(y_0)+\nabla h(y_0)^\top(y-y_0)\le h(y)\le b
\]
holds at every witness even for infeasible $y_0$. For $h(y)=\alpha y^\top Qy$, require $\alpha Q\succeq0$. Cuts may exclude other relaxed points.

\subsection{Complete trajectory proof and terminal convexity}\label{sec_supp_trajectory_proof}
The terminal structural conditions are continuous convex variable domains, a convex objective and convex canonical inequality functions, affine equalities, and PSD constraints given by affine matrix maps.

\begin{proof}[Proof of Proposition~\ref{prop:composition}]
Fix $x\in\mathcal X_0$, $E_0(x)=x$. Maintain three simultaneous invariants: recipe-valued auxiliaries, satisfied constraints and bounds, and objective at most $f_0(x)$, also justifying subsequent cuts and links.

Assume the invariants at $t$. Integrality relaxation retains coordinates; exact lifting extends recipes by Lemma~\ref{lem_witness}, satisfies new constraints by Appendix~\ref{sec_supp_single_step}, and preserves occurrence values. Lemma~\ref{lem_context} handles one-sided scalar substitutions; equality and PSD changes require separate justification. The bound and cut deductions apply at the same assignment. Rejected proposals and exact normalization preserve the invariants, completing induction.

Thus $E_T(x)\in\mathcal X_T$ and $f_{0,T}(E_T(x))\le f_0(x)$ for every $x\in\mathcal X_0$; taking infima gives $L_T\le p_0^\star$. This pathwise argument covers adaptive or randomized policies and any finite stopping time, without optimizers, strong duality, nested relaxations, or training convergence. The stated terminal conditions define a convex feasible set by intersection; its convex objective completes the claim.
\end{proof}

Implementation of the admissibility checks is described in Appendix~\ref{sec_supp_implementation}.

\section{Local discrepancy bounds and mixed error propagation}\label{sec_supp_error}
The following local bounds and sensitivities compare original expressions with transformed representations in Theorem~\ref{thm_error}.

\subsection{Product and square discrepancies}
For a nondegenerate McCormick rectangle, normalize $u=\ell_u+\Delta_u s$, $v=\ell_v+\Delta_v t$, where $s,t\in[0,1]$, and define $h$ by
\[
w=\ell_u\ell_v+\ell_u\Delta_v t+\ell_v\Delta_u s+\Delta_u\Delta_v h.
\]
The envelope is $\max\{0,s+t-1\}\le h\le\min\{s,t\}$, and $w-uv=\Delta_u\Delta_v(h-st)$. If $s\le t$, then $\min\{s,t\}-st=s(1-t)\le s(1-s)\le1/4$; the case $t\le s$ is symmetric. If $s+t\le1$, then $st\le1/4$; if $s+t\ge1$, then $st-(s+t-1)=(1-s)(1-t)\le1/4$. Hence
\begin{equation}
|w-uv|\le\Delta_u\Delta_v/4.
\label{eq:mcc_error}
\end{equation}
A zero width forces the exact product, giving the same bound.

For $w$ representing $u^2$ with tangents at a set $\mathcal T$ containing the interval endpoints and with the secant upper bound,
\begin{equation}
w-u^2\le(u-\ell)(U-u),\qquad
u^2-w\le\min_{a\in\mathcal T}(u-a)^2.
\label{eq:square_error}
\end{equation}
Endpoint tangents bound both sides by $(U-\ell)^2/4$. Adding the midpoint tightens the second bound to $(U-\ell)^2/16$ without changing the secant bound. Nonnegativity and interval constraints can only tighten these finite-tangent bounds.

\subsection{SDP and spectral discrepancies}
The Schur complement of the unit leading entry gives $S=Z-zz^\top\succeq0$. For symmetric $Q$, spectral decomposition of $S=\sum_j s_j u_j u_j^\top$, $s_j\ge0$, yields
\begin{equation}
|\operatorname{Tr}(QS)|
\le\sum_j s_j|u_j^\top Qu_j|
\le\|Q\|_2\sum_j s_j
=\|Q\|_2\operatorname{Tr}(S).
\label{eq:sdp_error}
\end{equation}
For one-sided estimates write $Q=Q_+-Q_-$ with $Q_+,Q_-\succeq0$. Then
\[
-\operatorname{Tr}(Q_-S)\le\operatorname{Tr}(QZ)-z^\top Qz
\le\operatorname{Tr}(Q_+S).
\]
The condition $S=0$ suffices, but is not necessary, for a zero trace residual.

For QCR, Eq.~\eqref{eq:qcr_exact_error} and $(t-a)(b-t)\le(b-a)^2/4$ give
\begin{equation}
0\le q(z)-\widehat q(z)
\le\frac14\sum_{\lambda_k<0}|\lambda_k|(b_k-a_k)^2.
\label{eq:qcr_error_bound}
\end{equation}
The error in Eq.~\eqref{eq:qcr_exact_error} vanishes exactly when every nondegenerate negative-eigenvalue direction is at an endpoint.

\subsection{Fractional discrepancies and their coupling to product errors}
For a candidate ratio variable $\lambda$ with $|d|\ge\delta>0$,
\begin{equation}
\left|\lambda-\frac{p}{d}\right|
=\frac{|\lambda d-p|}{|d|}
\le\frac{|\lambda d-p|}{\delta}.
\label{eq:frac_residual}
\end{equation}
For $d=a^\top u+b$, let $w_j$ represent $\lambda u_j$ and $\widetilde p$ represent the numerator. With defining-relation residual $\epsilon_{\mathrm{eq}}=\sum_j a_j w_j+b\lambda-\widetilde p$, direct subtraction gives
\begin{equation}
\lambda d-p
=\epsilon_{\mathrm{eq}}+\sum_j a_j(\lambda u_j-w_j)+(\widetilde p-p).
\label{eq:frac_coupling}
\end{equation}
Consequently, if $|\widetilde p-p|\le E_p$ at the same parent values,
\begin{equation}
\left|\lambda-\frac{p}{d}\right|
\le\frac{|\epsilon_{\mathrm{eq}}|+\sum_j|a_j|\,|\lambda u_j-w_j|+E_p}{\delta}.
\label{eq:frac_combined_error}
\end{equation}
Retain $\epsilon_{\mathrm{eq}}$ if later rewrites make the equality inexact; division sensitivities below propagate denominator errors.

\subsection{Dependency graph, explicit sensitivities, and proof}
On a finite acyclic graph, use inputs $\bar x$, exact values $y_v^\star=\phi_v(y_{\operatorname{pa}(v)}^\star)$, and selected representations $\bar y_v$. Unchanged operations use represented parents. Record affine definitions and equality residuals, consistently reusing shared values and separating distinct representations. This analytical graph need not be the GNN graph.

Set $r_v=\eta_v=e_v=0$ at original inputs. Elsewhere, bound $r_v=\bar y_v-\phi_v(\bar y_{\operatorname{pa}(v)})$ directly or by the preceding local formulas. Certification requires parent boxes enclosing both exact and represented candidate values and keeping each operation well-defined; original-feasibility bounds alone may not cover an infeasible projection. The following constants ensure

\begin{equation}
|\phi_v(a)-\phi_v(b)|\le\sum_u L_{vu}|a_u-b_u|
\label{eq:coordinate_lipschitz}
\end{equation}
on the box:
\begin{itemize}
\item Affine $\phi(y)=c+\sum_j a_jy_j$: $L_j=|a_j|$.
\item Product $\phi(u,v)=uv$, $|u|\le M_u$, $|v|\le M_v$: $L_u=M_v$, $L_v=M_u$, by adding and subtracting $u'v$.
\item Quadratic $\phi(u)=u^\top Qu$, $Q=Q^\top$, $|u_j|\le M_j$: $L_j=2\sum_k|Q_{jk}|M_k$, using gradient $2Qu$ along the segment between parent tuples within the box.
\item Division $\phi(p,d)=p/d$, fixed-sign $d$ with $|d|\ge\delta$, $|p|\le M_p$: $L_p=1/\delta$, $L_d=M_p/\delta^2$, since $p/d-p'/d'=(p-p')/d+p'(d'-d)/(dd')$.
\end{itemize}
Count repeated inputs with multiplicity. Signed interval propagation may further tighten the constants.

\begin{proof}[Proof of Theorem~\ref{thm_error}]
At original inputs, $y_v^\star=\bar y_v$. For any subsequent node, Eq.~\eqref{eq:coordinate_lipschitz} gives
\begin{align}
|y_v^\star-\bar y_v|
&\le |\phi_v(y_{\operatorname{pa}(v)}^\star)-\phi_v(\bar y_{\operatorname{pa}(v)})|+|r_v|\notag\\
&\le\sum_u L_{vu}|y_u^\star-\bar y_u|+\eta_v
\le\sum_u L_{vu}e_u+\eta_v=e_v.
\label{eq:graph_induction}
\end{align}
One simultaneous topological induction covers all nodes and shared subexpressions. Unrolling gives
\begin{equation}
e_v=\sum_{w\preceq v}\eta_w
\sum_{P:w\leadsto v}\prod_{(u\to j)\in P}L_{ju},
\label{eq:path_error}
\end{equation}
with unit weight for a length-zero path and summed weights for repeated edges. The residuals and sensitivities depend on the construction, not action counts or fixed reward penalties.
\end{proof}

An illustrative mixed quadratic--fractional calculation is given in Appendix~\ref{sec_supp_mixed_example}.

\section{Recovery, exactness, and numerical certificates}\label{sec_supp_recovery}
\begin{proof}[Proof of Corollary~\ref{cor_recovery}]
For each original inequality, Theorem~\ref{thm_error} gives
\[
f_i(\bar x)\le\widetilde f_i(\bar z)+E_i\le0.
\]
Original-domain checks, including integrality, and affine equalities then establish $\bar x\in\mathcal X_0$; constraints without retained descendants require direct checks. Proposition~\ref{prop:composition} and certified $d_T\le L_T$ give $d_T\le p_0^\star\le f_0(\bar x)$. The objective discrepancy bound yields
\[
0\le f_0(\bar x)-p_0^\star
\le f_0(\bar x)-d_T
\le E_0+f_{0,T}(\bar z)-d_T.
\]
With finite terminal optimality and zero output discrepancies, terminal feasibility and the original-domain and equality checks give $\bar x\in\mathcal X_0$ and
\[
f_0(\bar x)=f_{0,T}(\bar z)=L_T\le p_0^\star\le f_0(\bar x).
\]
The equalities use zero objective discrepancy and terminal optimality; the inequalities use validity and feasibility. Hence $f_0(\bar x)=L_T=p_0^\star$ without $d_T=L_T$. Zero local residuals imply zero output errors via $\eta_v=0$, provided all active representations are checked: rank-one SDP blocks alone do not control McCormick, fractional, or spectral errors. Local errors may cancel, so zero residuals are sufficient but not necessary.
\end{proof}

\paragraph{Feasibility checks and recovered points}
Feasible recovery requires original-domain checks, including integrality, and constraint verification. Direct constraint evaluation provides an alternative when the sufficient margin test is inconclusive. A separately recovered feasible $x^{\mathrm{rec}}$ has gap at most $f_0(x^{\mathrm{rec}})-d_T$; applying the discrepancy bound to it requires bounds evaluated at that point.

\paragraph{Numerical certification}
A dual-feasible conic solution gives $d_T\le L_T$ by weak duality. Numerical certification requires verified feasibility, cone membership, residuals, and factorization errors. Empirical solver primal objectives are distinct from these certified dual bounds. The procedure is given in Appendix~\ref{sec_supp_certificate_checks}.

% Remaining supplementary content, integrated as Appendices E--J.
\widowpenalty=10000
\clubpenalty=10000
\displaywidowpenalty=10000
\section{Admissibility checks and certificate evaluation}\label{sec_supp_implementation}
The admissible action set in Eq.~\eqref{eq:admissible_action_set} combines target structure, variable domains, occurrence context, and shared-representation consistency. The location and operator masks in Eqs.~\eqref{eq:location_mask} and~\eqref{eq:operator_mask} restrict the policy to this set. With the conditions in Appendix~\ref{sec_supp_validity}, Proposition~\ref{prop:composition} guarantees validity along every finite admissible trajectory, independently of the learned policy parameters.

The constructions covered by this statement are those specified in the main text and Appendix~\ref{sec_supp_single_step}. In particular, QCR uses one chord per concave eigendirection, one-sided substitutions must be valid at every changed occurrence, and cuts must use the curvature of the complete signed expression. Auxiliary reuse must preserve recipe identity. Terminal convexity and numerical recovery certificates require the respective conditions in Appendices~\ref{sec_supp_trajectory_proof} and~\ref{sec_supp_recovery}.

Applying Proposition~\ref{prop:composition} to an executed trajectory requires joint checks of the sampling masks, occurrence-wise substitutions, signed curvature, and auxiliary reuse. Numerical masking must preserve zero probability for excluded actions after logit clipping. The experimental MOSEK interface returns primal objectives; certified numerical lower bounds require dual-feasible solutions and the checks in Appendix~\ref{sec_supp_certificate_checks}.

\subsection{Illustrative mixed-error calculation}\label{sec_supp_mixed_example}
Consider an objective $f=\alpha uv+z^\top Qz+p(x)/d(x)$ with representations $\widetilde f=\alpha w+\operatorname{Tr}(QZ)+\lambda$. If these factors use the original coordinates directly, $S=Z-zz^\top\succeq0$, and $|d(x)|\ge\delta$, then
\begin{equation}
|f-\widetilde f|
\le |\alpha|\,|uv-w|+\|Q\|_2\operatorname{Tr}(S)
+\frac{|\lambda d(x)-p(x)|}{\delta}.
\label{eq:mixed_example}
\end{equation}
If $p$ or the factors themselves have been lifted, Eqs.~\eqref{eq:frac_combined_error} and~\eqref{eq:error_recursion} propagate those additional discrepancies. A QCR term contributes Eq.~\eqref{eq:qcr_exact_error}. This example shows explicitly how different operators in one trajectory enter a common certificate without assuming one global relaxation family.

\subsection{Certificate evaluation procedure}\label{sec_supp_certificate_checks}
An instance-specific certificate uses the original formulation, the executed rewrites and shared recipes, terminal primal values, and a verified relaxation lower bound. Check the trajectory conditions of Proposition~\ref{prop:composition}, compute local residuals and candidate-valid enclosures, propagate the bounds, and verify original domains, equalities, and inequality margins. These checks establish the gap bound in Eq.~\eqref{eq:recovery_gap} for the selected trajectory. They constitute a post-processing certification procedure; the reported experiments evaluate solver objectives. Operator-specific reward weights guide training, while instance-specific residuals determine the certificate.

\section{Policy implementation and training}\label{sec_supp_policy_training}
\subsection{Policy architecture}\label{sec_supp_architecture}
The encoder produces node embeddings and type-wise pooled graph representations, as described in Eqs.~\eqref{eq:node_embedding} and~\eqref{eq:graph_pool}. Learned readouts map these representations to a common token dimension for self-attention and the factorized location--operator policy.

The implementation uses two heterogeneous message-passing layers of width 512. Node readout projects term representations from 512 to 256 dimensions; global readout projects the concatenated type-wise means from 1536 to 256 dimensions. The relation-specific updates use GATv2 for variable--term edges \citep{brody2022attentive}, graph convolution for nested-term edges, and mean-aggregating GraphSAGE for term--position edges \citep{hamilton2017inductive}. Equation~\eqref{eq:gnn_update} gives the mean-aggregation form of relation-wise message passing; these implementation choices specify the updates used for each relation in the experiments.

The factorized policy in the main paper is parameterized by a self-attention module and two masked heads. Its input is
\begin{equation}
H_t =
[z_{\ell_0}, z_{\ell_1}, \cdots,z_{\ell_{n_t}}],
\quad
\mathrm{with} \ \ell_0=\ell_{\mathrm{glob}},\
z_{\ell_0}=z_{\widetilde G_t}.
\label{eq:input}
\end{equation}
Here, $z_{\widetilde{G}}$ and $z_{\ell_i}$ denote the 256-dimensional readouts of the graph representation in Eq.~\eqref{eq:graph_pool} and the candidate-location embeddings in Eq.~\eqref{eq:node_embedding}, respectively.

Multi-head self-attention \citep{vaswani2017attention} contextualizes the input tokens:
\begin{equation}
\widetilde H_t =
\operatorname{LN}\big(H_t+\operatorname{MHA}(H_t,H_t,H_t)\big),
\label{eq:policy_attention}
\end{equation}
where $\operatorname{MHA}$ denotes multi-head self-attention with queries, keys, and values all given by $H_t$; $\operatorname{LN}$ denotes layer normalization.

A learnable location head $\psi_{\mathrm{loc}}$ maps the contextualized tokens to location logits. The location policy is the masked softmax
\begin{equation}
\pi_\theta^{(\ell)}(\cdot\mid\mathcal{F}_t) = \mathrm{softmax}_{i\in\mathcal{I}_t}\big(\psi_{\mathrm{loc}}(\widetilde H_t)\big).
\label{eq:location_policy}
\end{equation}
where $\mathcal{I}_t$ is the admissible token-index set. With $H_t$ (Eq. \eqref{eq:input}), this set is
\begin{equation}
\mathcal{I}_t =
\begin{cases}
\{i:1\le i\le n_t,\ \exists o\in\mathcal{O},\;(\ell_i,o)\in\mathcal{A}_t\},
& \Lambda_t\neq\emptyset,\\
\{0:\exists o\in\mathcal{O},\;(\ell_{\mathrm{glob}},o)\in\mathcal{A}_t\},
& \Lambda_t=\emptyset.
\end{cases}
\label{eq:location_mask}
\end{equation}
Thus, while nonconvex term locations remain, the global token is masked out and the policy selects among admissible target terms. After convexification, the global token is exposed for formulation-level actions such as bound tightening or cut generation.

Given the selected location index $i_t$, the operator policy uses a second masked head
\begin{equation}
\pi_\theta^{(o)}(o_t=o\mid i_t,\mathcal{F}_t) = \mathrm{softmax}_{o\in\mathcal{M}_t^{(o)}(i_t)}
\left(\psi_{\mathrm{op}}\big([\widetilde H_t[i_t];q_{\ell_t}]\big)\right)_o,
\label{eq:operator_policy}
\end{equation}
where $\ell_t=\ell_{i_t}$ with the convention $\ell_0=\ell_{\mathrm{glob}}$, and $q_{\ell_t}$ encodes operator-relevant attributes of the selected location. The mask $\mathcal M_t^{(o)}(i_t)$ is induced by the admissible rewrite set in Eq.~\eqref{eq:operator_mask}, assigning zero probability to excluded operators and enforcing domain, curvature, and occurrence-context conditions.

One eight-head self-attention layer contextualizes the resulting global and candidate-term tokens. The location head is a $256$--$512$--$1$ multilayer perceptron. The operator head concatenates the selected token with four binary attributes for continuous, integer, binary, and fractional structure, giving a $260$--$512$--$7$ multilayer perceptron. The value head uses a $256$--$256$--$1$ architecture with a Tanh activation. All modules are trained jointly. Table~\ref{tab:policy_architecture} summarizes the encoder and policy settings.

\begin{table}[t]
\centering
\caption{Encoder and policy architecture settings.}
\label{tab:policy_architecture}
\small
\renewcommand{\arraystretch}{1.12}
\begin{tabular*}{\textwidth}{@{\extracolsep{\fill}}ll@{}}
\toprule
Component & Configuration \\
\midrule
Message-passing layers & 2 \\
Graph hidden dimension & 512 \\
Unprojected pooled dimension & $3\times512=1536$ \\
Node / global readout & $512\to256$ / $1536\to256$ \\
Policy-token dimension & 256 \\
Graph pooling & Type-wise mean pooling \\
Graph dropout & 0.1 \\
Self-attention layers / heads & 1 / 8 \\
Location head & $256$--$512$--$1$ \\
Operator head & $260$--$512$--$7$ \\
Value head & $256$--$256$--$1$ \\
\bottomrule
\end{tabular*}
\end{table}

\subsection{Reward and policy optimization}\label{sec_supp_reward}
Starting from the original formulation $\mathcal{F}_0$, the policy generates
\begin{equation}
\zeta=(\mathcal{F}_0,a_0,\mathcal{F}_1,a_1,\cdots,a_{T-1},\mathcal{F}_T).
\label{eq:trajectory}
\end{equation}
The episode terminates when the horizon is reached, or when the formulation is convex and the budget for post-convex global actions is exhausted.

We train the policy with shaped step rewards and terminal solver feedback. The step reward is
\begin{equation}
\begin{aligned}
\bar r_t={}&
\alpha^\top
\begin{bmatrix}
r_t^{\mathrm{prog}}\\
r_t^{\mathrm{struct}}\\
r_t^{\mathrm{dense}}
\end{bmatrix}
-\beta\,\widetilde{\mathrm{Cost}}_t-\xi_t,\\
r_t={}&r_{\max}\tanh\!\left(\frac{\bar r_t}{r_{\max}}\right),
\end{aligned}
\label{eq:step_reward}
\end{equation}
where $r_t^{\mathrm{prog}}$ rewards convexification progress, $r_t^{\mathrm{struct}}$ rewards structural simplification, and $r_t^{\mathrm{dense}}$ rewards reduction of nonconvex terms. $\widetilde{\mathrm{Cost}}_t$ penalizes estimated formulation growth, $\xi_t$ collects penalties for approximation error and ineffective rewrites, and $r_{\max}$ bounds the step reward.

If the terminal formulation is convex, it is solved numerically and a solver bonus is added:
\begin{equation}
r_T^{\mathrm{solver}}
=
\eta
\tanh\!\left(
\frac{\mathrm{LB}(\mathcal{F}_T)-\mathrm{LB}_{\mathrm{root}}}
{s\max\!\big(|\mathrm{LB}_{\mathrm{root}}|,\delta\big)}
\right),
\label{eq:solver_bonus}
\end{equation}
where $\mathrm{LB}_{\mathrm{root}}$ is the cached root lower bound of the original instance, while $\eta$, $s$, and $\delta$ control reward scale and numerical stability.

The progress components above are computed from
the symbolic change produced by each rewrite. The direct-progress term
records whether the rewritten term enters a supported convexity class
and includes a bonus when the formulation first becomes convex. The
structural term records whether newly exposed terms are supported and
otherwise aggregates reductions in fractional terms, expression depth,
nonsmooth terms, and multiplicative terms. The dense term is the
relative reduction in nonconvex terms.

The formulation-growth term is
\begin{equation}
\widetilde{\mathrm{Cost}}_t=c_{o_t}+\Delta C_t,
\end{equation}
where $c_{o_t}$ is the nominal cost of the selected operator. Its
state-dependent component is
\begin{equation}
\begin{aligned}
\Delta C_t={}&
w_{\mathrm{var}}\Delta n_{\mathrm{var}}
+w_{\mathrm{mat}}\Delta n_{\mathrm{mat}}
+w_{\mathrm{aux}}\Delta n_{\mathrm{aux}}
+w_{\mathrm{term}}\Delta n_{\mathrm{term}}\\
&+w_{\mathrm{cons}}\Delta n_{\mathrm{cons}}
+w_{\mathrm{PSD},2}\Delta d_{\mathrm{PSD}}^{(2)}
+w_{\mathrm{PSD},3}\Delta d_{\mathrm{PSD}}^{(3)}.
\end{aligned}
\end{equation}
Each $\Delta$ is the nonnegative increase from $\mathcal{F}_t$ to
$\mathcal{F}_{t+1}$. The subscripts $\mathrm{var}$, $\mathrm{mat}$,
$\mathrm{aux}$, $\mathrm{term}$, and $\mathrm{cons}$ denote scalar
variables, matrix-variable entries, generated auxiliary variables,
symbolic terms, and constraints. Auxiliary variables are counted again
by $\Delta n_{\mathrm{aux}}$ to penalize lifting more strongly. If the
PSD cones have dimensions $d_j$, then
$\Delta d_{\mathrm{PSD}}^{(p)}$ is the increase in
$\sum_j d_j^p$ for $p\in\{2,3\}$, approximating storage and
factorization costs.

The total penalty is
\begin{equation}
\xi_t=\beta_{\mathrm{error}}e_{o_t}+\xi_t^{\mathrm{ineff}},
\end{equation}
where $e_{o_t}$ is the operator-specific approximation-error scale and
$\xi_t^{\mathrm{ineff}}$ is nonzero only for an ineffective rewrite.
Post-convex global actions receive a positive step signal only when they
tighten variable bounds.

The terminal solver bonus above is evaluated on a
sample of completed convex rollouts. Each training-time solve has a
fixed time limit, and failed solves receive a fixed penalty.

We use PPO with standardized returns and advantages. The coefficients prioritize direct convexification over intermediate structural progress, while penalizing excessive formulation growth and weak approximations. Their values place heterogeneous reward components on comparable scales, bound outliers, and stabilize training.

We optimize the policy with proximal policy optimization \citep{schulman2017proximal}. For action $a_t=(\ell_t,o_t)$, its probability is given by the factorized location and operator heads in Eq.~\eqref{eq:factor_policy}; the log probability is the sum of the two log factors. The training loss is
\begin{equation}
\mathcal{L}(\theta)
=
-
\mathbb{E}_t
\left[
\min\left(
\rho_t(\theta)\hat{A}_t,\;
\operatorname{Clip}\big(\rho_t(\theta),1-\epsilon,1+\epsilon\big)\hat{A}_t
\right)
\right]
+
\lambda_v L^{\mathrm{value}}
-
\lambda_e \mathcal{H}(\pi_\theta),
\label{eq:ppo_loss}
\end{equation}
where
\begin{equation}
\rho_t(\theta)
=
\frac{\pi_\theta(a_t\mid\mathcal{F}_t)}
{\pi_{\theta_{\mathrm{old}}}(a_t\mid\mathcal{F}_t)} .
\label{eq:ratio}
\end{equation}
Here, $\mathbb{E}_t$ averages over sampled rollout steps; $\rho_t(\theta)$ is the policy probability ratio; $\theta_{\mathrm{old}}$ denotes the fixed
policy parameters used to collect the rollout; $\hat{A}_t$ is the advantage estimate from the learned value function; $\epsilon$ is the clipping range;
$L^{\mathrm{value}}$ is the value-function loss; $\mathcal{H}(\pi_\theta)$ is the policy entropy; $\lambda_v,\lambda_e$ weight the value and entropy terms. All neural components, including the graph encoder, self-attention module, and policy heads, are updated jointly from interactions with instances drawn from the training distribution. 

\subsection{Training configuration}\label{sec_supp_training}
The architecture and reward details are given in Appendices~\ref{sec_supp_architecture} and~\ref{sec_supp_reward}. Table~\ref{tab:policy_optimization} lists their numerical settings. Coefficients were set from training rollouts, fixed before evaluation, and never tuned on held-out instances. Operator costs and errors follow the order integrality relaxation, fractional removal, McCormick, SDP, QCR, bound tightening, and cut generation.
\begin{table}[htbp]
\centering
\caption{Policy optimization and implementation settings.}\label{tab:policy_optimization}
\footnotesize
\renewcommand{\arraystretch}{0.95}
\begin{tabular*}{\textwidth}{@{\extracolsep{\fill}}>{\raggedright\arraybackslash}p{0.27\textwidth}>{\raggedright\arraybackslash}p{0.18\textwidth}>{\raggedright\arraybackslash}p{0.27\textwidth}>{\raggedright\arraybackslash}p{0.18\textwidth}@{}}
\toprule
Setting & Value & Setting & Value \\
\midrule
Optimizer & Adam & Learning rate / epsilon & $10^{-4}$ / $10^{-8}$ \\
Discount factor $\gamma$ & 0.99 & PPO clip ratio $\epsilon$ & 0.15 \\
Value loss / clipping & 0.5 / 0.2 & Entropy coefficient & $0.03$, then $0.02\to0.005$ \\
PPO epochs / minibatch & 1 / 128 & Problems per update & 32 \\
Rollout steps / problem & 16 & Maximum gradient norm & 0.5 \\
Training / post-convex horizon & 20 / 4 & Progress signal & $1$ / $-0.1$ \\
Structural-reduction scales & \makecell[l]{$(0.20,0.15,$\\$0.25,0.20)$} & Step scales $\alpha$ & $(2,1.5,0.35)$ \\
Penalty scales & $(0.1,0.5,2.5)$ & Linear cost scales & \makecell[l]{$(0.020,0.012,0.050,$\\$0.010,0.012)$} \\
PSD cost scales & $(0.006,0.0003)$ & Ineffective / convexity bonus & $0.45$ / $2.0$ \\
Terminal scales $(\eta,s,\delta)$ & $(6,0.20,1)$ & Sampling / solve / failure & $20\%$ / $10$ s / $1.0$ \\
Operator costs $c_o$ & \makecell[l]{$(0.02,0.03,0.18,$\\$0.30,0.20,0.18,$\\$0.22)$} & Operator errors $e_o$ & \makecell[l]{$(0.25,0.10,0.20,$\\$0.05,0.08,0.00,$\\$0.02)$} \\
\bottomrule
\end{tabular*}
\end{table}

\subsection{Training and fine-tuning protocol}\label{sec_supp_training_protocol}
The standard-QCQP policy is trained in two stages. The first stage uses 200 updates with shaped step rewards. The second initializes from this checkpoint and uses 200 solver-guided updates. The 241 held-out QCQPs provide no trajectories, rewards, solver feedback, or parameter updates.

The fractional policy initializes from the standard-QCQP checkpoint. It is fine-tuned for 200 updates on a separate fractional training split using the same architecture and PPO settings. The 15 reported fractional instances remain excluded from fine-tuning. Application experiments use the standard-QCQP checkpoint directly. All parameters remain fixed, and no application instance is used for tuning.

\section{Benchmark instance construction}\label{sec_supp_instances}
\subsection{Standard QCQP benchmarks}\label{sec_supp_qcqp}
The benchmark set consists of synthetic standard QCQPs and fractional nonconvex problems with quadratic structure. They are constructed to expose different sources of nonconvexity. Each instance retains its variables, symbolic terms, bounds, and formulation locations. The standard-QCQP pool is split into training and held-out subsets in an 80/20 ratio. The main evaluation uses 241 held-out standard QCQPs. The fractional study uses a separate training split for fine-tuning and 15 held-out fractional instances for evaluation.

The held-out set covers seven dominant structural families. Mixed-structure QCQPs contain several nonconvex components in the same formulation. Dense quadratic blocks contain dense indefinite or concave quadratic terms. Diagonal or separable quadratics contain mostly coordinate-wise quadratic structure. Bilinear continuous instances contain products between continuous variables. Binary-coupled instances contain binary or integer variables coupled with continuous variables. Nonconvex-constraint instances contain quadratic nonconvexity in constraints. Concave-objective instances contain nonconvex curvature primarily in the objective. These labels are used only for analysis and plotting; the policy receives the symbolic formulation and graph representation, not the family label.

\subsection{Fractional benchmark structure}\label{sec_supp_fractional}
The held-out fractional generators combine an indefinite quadratic ratio with additional quadratic, bilinear, or binary--continuous terms, or place the ratio in a constraint. The variable-dependent denominator has the form
\begin{equation}
d(z)=6+\frac{1}{n}\sum_{j=1}^{n}z_j,
\end{equation}
with fixed coefficients and bounded continuous variables. Its interval bounds exclude zero, as required by the fractional-removal action in the main paper. Ratios are retained as single symbolic terms rather than distributed into separate monomial fractions. This keeps the denominator structure explicit for the action system. The generator uses vector dimensions of 5--7, or 4--5 for the binary--continuous family; indefinite quadratic blocks and dense bilinear couplings create the nonconvexity.

\section{Experimental settings}\label{sec_supp_settings}
\subsection{Baseline construction rules}\label{sec_supp_baselines}
The comparison methods and their selection rationale are given in Section~\ref{sec_experimental_settings} of the main paper. Their construction rules are as follows.

The same symbolic engine constructs all handcrafted endpoints. McCormick first relaxes integrality and safe fractional structure, then envelopes supported bilinear, square, inner-product, and affine-product terms. SDP uses the same preprocessing, lifts recognizable quadratic forms, and adds positive-semidefinite, symmetry, bound-derived, RLT, and McCormick constraints when supported; McCormick is the fallback for unsuitable terms. Random draws one legal rollout per instance from a seed determined by the experiment seed, rollout index, and instance name. The structure-based heuristic uses a fixed term order, applies McCormick to approximately separable terms, and applies SDP to coupled terms. For coefficient matrix $Q$ with $m$ active variables, it sets
\begin{equation}
\operatorname{useSDP}(Q)
= (m\ge k_{\min})\wedge(\rho(Q)\ge\tau_{\mathrm{density}}),
\end{equation}
and chooses SDP exactly when this condition holds, where
\begin{equation}
\rho(Q)
=
\frac{\mathrm{nnz}(\mathrm{offdiag}(Q))}{\max\{1,m(m-1)\}}.
\end{equation}
The density lies in $[0,1]$; we set $k_{\min}=4$ and $\tau_{\mathrm{density}}=0.75$.

\subsection{Evaluation metrics}\label{sec_supp_metrics}
Bound quality is measured by the normalized lower-bound difference between a learned value $v_{\mathrm{RL}}$ and a reference value $v_{\mathrm{ref}}$:
\begin{equation}
\Delta_{\mathrm{LB}}=100\,
\frac{v_{\mathrm{RL}}-v_{\mathrm{ref}}}
{\max\{|v_{\mathrm{RL}}|,|v_{\mathrm{ref}}|,10^{-9}\}}.
\label{eq:bound_metric}
\end{equation}
Positive values indicate tighter lower bounds. Pairwise tighter/tie/weaker outcomes use a tolerance of $10^{-6}$. Formulation growth is measured by relative changes in variable, constraint, and nonzero counts. The bound--cost comparison therefore measures cost through formulation size. Equation~\eqref{eq:bound_metric} reports normalized lower-bound differences relative to each reference relaxation. For application instances, surrogate objectives are compared with the best objective values returned by the reference solvers; beamforming uses upper-bounding relaxations and localization uses lower-bounding relaxations.

\subsection{Rollout and solver protocol}\label{sec_supp_protocol}
Evaluation greedily decodes both policy heads, allows at most 50 actions and six post-convex global actions, and stops after two consecutive ineffective rewrites. Convex terminal surrogates are exported to MOSEK Fusion as continuous affine, rotated-conic, or semidefinite models. Terminal surrogates receive 20 seconds within a 60-second comparison budget; Gurobi and SCIP root relaxations receive 60 seconds. Application surrogates and nonconvex reference solves receive 30 and 60 seconds, respectively. All 241 held-out QCQP instances have valid SCIP root bounds; the SCIP mean improvement and tighter/tie/weaker percentages use this full set. Experiments use Python 3.10.18, PyTorch 2.4.1, CUDA 12.1, and NVIDIA V100 GPUs.

\subsection{Structure-dependent bounds and formulation growth}\label{sec_supp_structure_details}
Fig.~\ref{fig:family_effect_size} shows that this adaptation delivers its largest gains where relaxation design has the highest leverage. The largest root-bound gains appear on binary-coupled quadratics, dense quadratic terms, mixed-structure QCQPs, and nonconvex quadratic constraints. These are high-leverage regimes. The Gurobi root relaxation is often loose, and stronger lifting or reformulation actions can materially tighten the lower bound. Bilinear continuous couplings show a modest positive effect, suggesting that the benefit is present but less systematic in weaker coupling regimes. Diagonal or separable quadratics show a negative mean improvement. These simpler structures offer less scope for bound improvement through additional formulation growth. The results indicate that the learned policy is most useful on heterogeneous and strongly coupled structures, where fixed lightweight relaxations are often insufficient.
\begin{figure}
    \centering
    \includegraphics[width=0.6\linewidth]{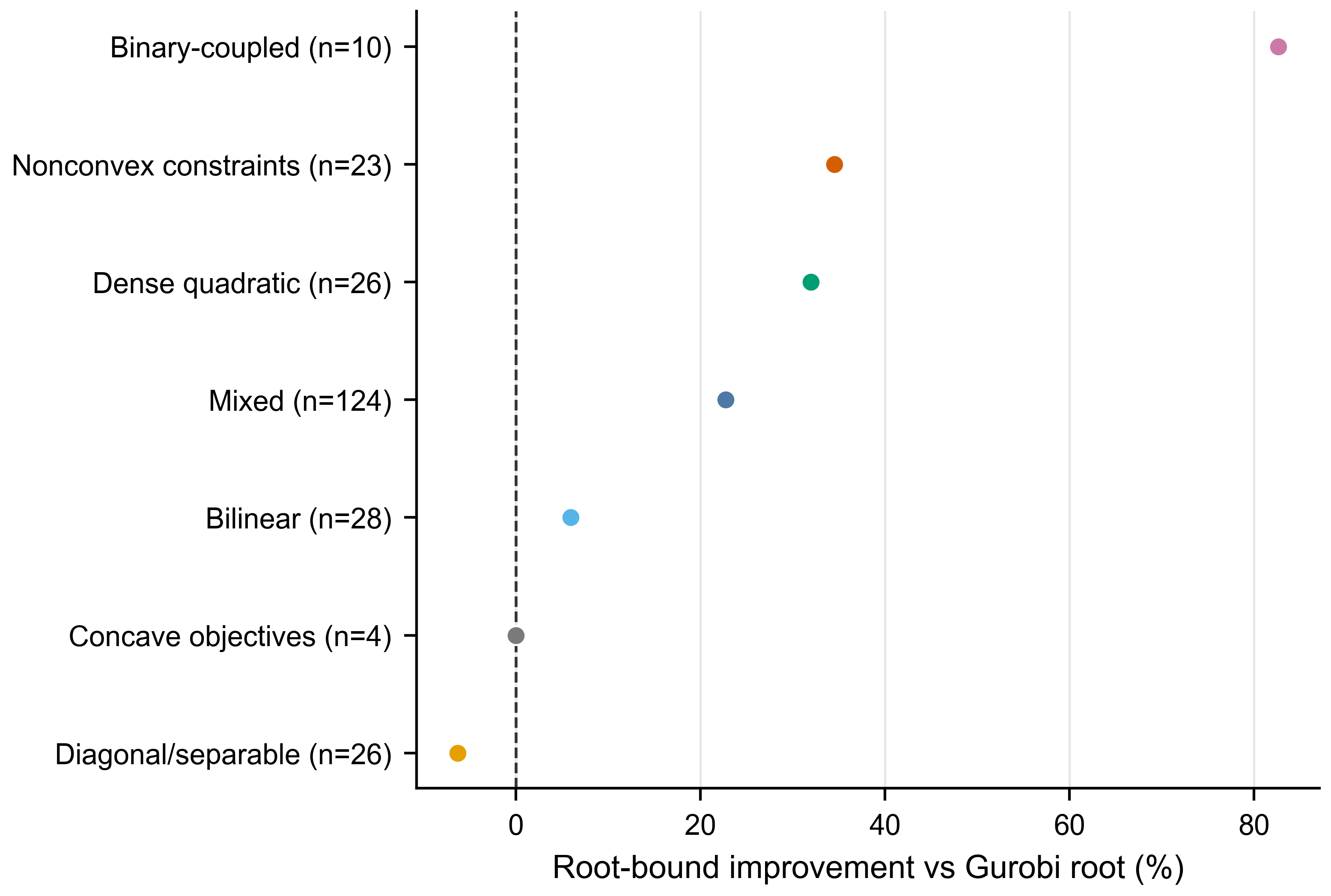}
    \caption{Structure-dependent root-bound improvement of learned trajectories. Points denote the mean improvement over the Gurobi root relaxation within each QCQP structural family; error bars are computed over random seeds.}
    \label{fig:family_effect_size}
\end{figure}

Fig.~\ref{fig:family_cost_tradeoff} shows how this structural adaptation translates into bound--cost trade-offs. Using 50\% root-bound improvement as a high-return threshold, 39 plotted instances exceed this level. These high-return cases are concentrated in mixed-structure QCQPs, nonconvex quadratic constraints, binary-coupled quadratics, and dense quadratic terms. No bilinear continuous or diagonal/separable instance reaches this threshold. The opposite pattern appears in the high-growth region. Among the 21 instances with at least \(10^3\) added nonzeros, 16 obtain no more than 5\% improvement. Thus, formulation growth is not a monotone proxy for relaxation strength. The useful trade-offs appear when stronger relaxation mechanisms are applied to coupled or constraint-level nonconvexity, where these mechanisms have high leverage. Together with the action-composition results, this shows that the policy adapts both which relaxation mechanisms to use and where added formulation size translates into bound improvement.
\begin{figure}
    \centering
    \includegraphics[width=0.6\linewidth]{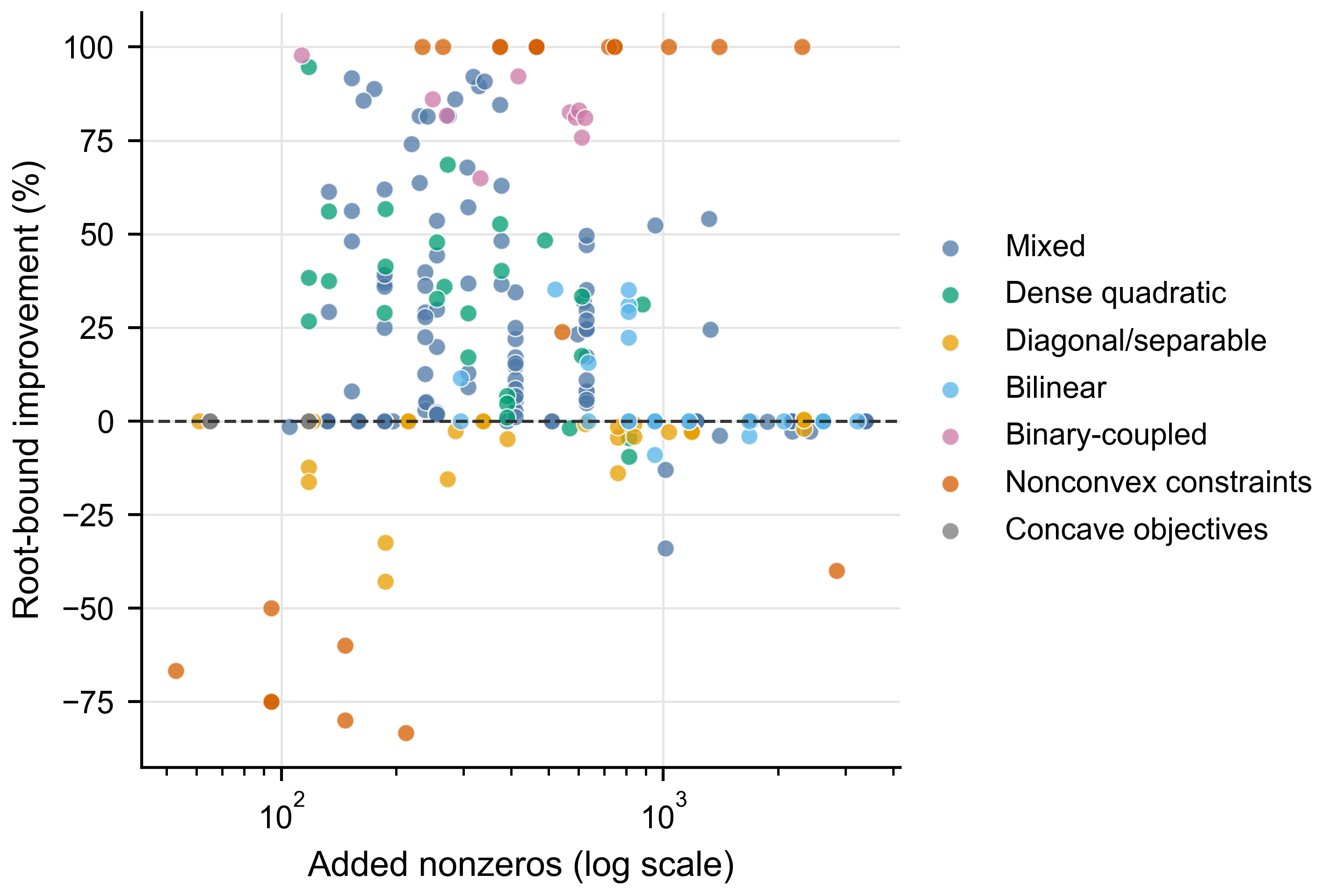}
    \caption{Bound--cost trade-off of learned trajectories. Each point represents a held-out instance averaged over converged random seeds. The horizontal axis reports added nonzeros as a proxy for formulation growth on a logarithmic scale; the vertical axis reports root-bound improvement over the Gurobi root relaxation. Colors indicate QCQP structural families.}
    \label{fig:family_cost_tradeoff}
\end{figure}

\section{Detailed term-level trajectory inspection}\label{sec_supp_trajectories}
This section reports the full algebraic rewrites for the three trajectories summarized in the main text. It also analyzes case \texttt{H\_MIXED\_1\_n4\_929}, which exposes term-level selectivity when the solver root relaxation is already tight.

\subsection{Placement and sequencing in representative trajectories}\label{sec_supp_trajectory_analysis}
Table~\ref{tab:term_level_trajectories} summarizes the trajectories. \texttt{H\_MIXED\_5\_n3\_4\_615} exposes why relaxation design should operate at the level of individual nonconvex terms. The problem contains two quadratic terms,
\begin{equation}
\begin{aligned}
P^{(0)}:\quad
\min_{x,y}\quad &
x^\top Q_x x + y^\top Q_y y \\
\mathrm{s.t.}\quad &
\|x\|_2^2+\|y\|_2^2\le 16.3616,
\end{aligned}
\label{eq:hmixed5_p0}
\end{equation}
where $Q_x=\mathrm{diag}(-0.396,0.800,-0.943)$ and $Q_y$ is dense. A global template cannot distinguish these two sources of nonconvexity. The McCormick endpoint leaves the bound unchanged at $-29.73$, whereas full SDP lifts the whole quadratic structure. The learned trajectory makes a finer decision. It applies McCormick relaxation to the two concave diagonal quadratic terms,
\begin{equation}
x^\top Q_x x
\;\Longrightarrow\;
-0.396\,w_{x_0x_0}
+0.800\,w_{x_1x_1}
-0.943\,w_{x_2x_2},
\qquad
(x,w_x)\in\mathcal{M}_x,
\label{eq:hmixed5_mcc}
\end{equation}
and reserves SDP lifting for the dense term,
\begin{equation}
y^\top Q_y y
\;\Longrightarrow\;
\mathrm{Tr}(Q_yZ_y),
\qquad
\begin{bmatrix}
1 & y^\top\\
y & Z_y
\end{bmatrix}\succeq 0.
\label{eq:hmixed5_sdp}
\end{equation}
The resulting surrogate reaches a lower bound of $-11.603$, close to the SDP bound $-11.30$, while avoiding a uniform SDP treatment of both terms. This example shows more than operator mixing. It shows localized relaxation strength. The policy uses a cheap envelope where the structure is separable and spends matrix lifting where dense coupling controls the relaxation gap. Relaxation strength is therefore allocated at the term level rather than applied uniformly.
\begin{table}[t]
\centering
\caption{Relaxation trajectories produced by the policy. The trajectories are reported as decision-level summary of the target term, relaxation operator, and rewrite role. Full algebraic rewrites are provided in Appendix~\ref{sec_supp_trajectories}.}
\label{tab:term_level_trajectories}
\footnotesize
\renewcommand{\arraystretch}{1.18}
\setlength{\tabcolsep}{3pt}
\begin{tabular*}{\linewidth}{@{\extracolsep{\fill}}>{\raggedright\arraybackslash}p{2.2cm}>{\raggedright\arraybackslash}p{0.75cm}>{\raggedright\arraybackslash}p{3.0cm}>{\raggedright\arraybackslash}p{2.35cm}>{\raggedright\arraybackslash}p{5.6cm}@{}}
\toprule
Instance & Step & Target term or structure & Operator & Rewrite and role \\
\midrule
\multirow{2}{=}{H\_MIXED\_5\\\_n3\_4\_615}
& 1
& diagonal term $x^\top Q_x x$
& McCormick
& $x^\top Q_xx \mapsto \sum_i q_i w_{x_ix_i}$; envelope relaxation of the diagonal quadratic term \\
& 2
& dense term $y^\top Q_y y$
& SDP
& $y^\top Q_yy \mapsto \mathrm{Tr}(Q_yZ_y)$; matrix lifting for the dense quadratic term \\
\midrule
\multirow{3}{=}{E\_EXP\_P3\\\_n4\_945}
& 1
& binary term $b\in\{0,1\}^4$
& RelaxIntegrality
& $b\in\{0,1\}^4 \mapsto b\in[0,1]^4$; domain relaxation before lifting \\
& 2
& bilinear coupling $2b^\top Ax$
& McCormick
& $2b^\top Ax \mapsto \langle C,w_{bx}\rangle$; envelope relaxation of binary--continuous coupling \\
& 3
& concave quadratic $-0.4375\|x\|_2^2$
& SDP
& $-0.4375\|x\|_2^2 \mapsto -0.4375\,\mathrm{Tr}(Z_x)$; lifting of the remaining concave quadratic term \\
\midrule
\multirow{4}{=}{H\_MIXED\_1\\\_n4\_929}
& 1
& diagonal bilinear term $2x^\top Dy$
& McCormick
& $2x^\top Dy \mapsto \sum_i 2d_i w_{x_iy_i}$; envelope relaxation of cross-variable coupling \\
& 2
& uniform bilinear term $0.34235\,x^\top y$
& McCormick
& $0.34235\,x^\top y \mapsto 0.34235\sum_i w_{x_iy_i}$; envelope relaxation of cross terms \\
& 3
& diagonal quadratic $y^\top Qy$
& McCormick
& $y^\top Qy \mapsto \sum_i q_i w_{y_iy_i}$; envelope relaxation of the separable quadratic term \\
& 4
& remaining quadratic term $x^\top Qx$
& SDP
& $x^\top Qx \mapsto \mathrm{Tr}(QZ_x)$; matrix lifting for the remaining quadratic term \\
\bottomrule
\end{tabular*}
\end{table}

\texttt{E\_EXP\_P3\_n4\_945} illustrates sequencing in relaxation design. Unlike the previous example's term-wise placement, this instance contains multiple interacting sources of nonconvexity: a binary decision term, a binary--continuous coupling, and a concave quadratic term:
\begin{equation}
\begin{aligned}
P^{(0)}:\quad
\min_{x,b}\quad &
\|b\|_2^2
-0.4375\|x\|_2^2
+2b^\top A x
+x^\top Qx \\
\mathrm{s.t.}\quad &
b\in\{0,1\}^4,\quad
x_i \le 2.23395\,b_i,\\
&
\|b\|_2^2 \le 3.2277,\quad
\|x\|_2^2 \le 18.1793,\quad
\mathbf{1}^\top b = 1.
\end{aligned}
\end{equation}

The learned trajectory does not apply relaxation operators simultaneously. Instead, it constructs a decomposition sequence of the problem structure. It first resolves the discrete structure
\begin{equation}
b\in\{0,1\}^4 \;\Longrightarrow\; b\in[0,1]^4,
\end{equation}
then convexifies the binary--continuous coupling via McCormick envelopes
\begin{equation}
2b^\top A x \;\Longrightarrow\; \langle C,w_{bx}\rangle,
\qquad (b,x,w_{bx})\in\mathcal{M}_{bx},
\end{equation}
and finally applies SDP lifting to the remaining concave quadratic component
\begin{equation}
-0.4375\|x\|_2^2 \;\Longrightarrow\; -0.4375\,\mathrm{Tr}(Z_x),
\qquad
\begin{bmatrix}
1 & x^\top\\
x & Z_x
\end{bmatrix}\succeq 0.
\end{equation}

This sequencing is not dictated by a predefined relaxation template. It reflects a learned structural decomposition strategy: discrete constraints are relaxed first to expose a continuous domain, coupling terms are then convexified locally, and SDP lifting is reserved for the residual concave curvature. The resulting surrogate achieves a lower bound of $-34.28$, matching the SDP relaxation at the reported precision and improving over the McCormick endpoint of $-35.06$. This example demonstrates a stronger form of compositionality than operator selection alone. The policy does not only decide which relaxation mechanisms to use, but also how to organize them into a sequential program over progressively simplified formulations. This yields a relaxation design that is inherently ordered rather than a static collection of operators.

These two inspections make the learned trajectory explicit. It is not only a list of relaxation operators. It specifies where each mechanism is placed and how the resulting rewrites are sequenced to form the convex surrogate.

\subsection{Trajectory for \texttt{H\_MIXED\_5\_n3\_4\_615}}\label{sec_supp_trajectory1}
The original objective contains a diagonal quadratic block \(x^\top Q_xx\) and a dense quadratic block \(y^\top Q_yy\). The policy first applies McCormick relaxation to the diagonal block,
\begin{equation}
x^\top Q_xx
\Longrightarrow
-0.396\,w_{x_0x_0}+0.800\,w_{x_1x_1}-0.943\,w_{x_2x_2},
\qquad (x,w_x)\in\mathcal{M}_x,
\end{equation}
where \(\mathcal{M}_x\) denotes the McCormick envelope constraints for the corresponding square terms. The policy then applies SDP lifting to the dense block,
\begin{equation}
y^\top Q_yy
\Longrightarrow
\mathrm{Tr}(Q_yZ_y),
\qquad
\begin{bmatrix}
1 & y^\top\\
y & Z_y
\end{bmatrix}\succeq 0.
\end{equation}
The final surrogate combines a lightweight envelope for the separable diagonal term with a matrix lift for the dense term.

\subsection{Trajectory for \texttt{E\_EXP\_P3\_n4\_945}}\label{sec_supp_trajectory2}
This instance contains binary variables \(b\), binary--continuous coupling, and concave quadratic curvature. The policy first relaxes the binary domain,
\begin{equation}
b\in\{0,1\}^4 \Longrightarrow b\in[0,1]^4.
\end{equation}
It then applies McCormick envelopes to the binary--continuous coupling,
\begin{equation}
2b^\top Ax \Longrightarrow \langle C,w_{bx}\rangle,
\qquad (b,x,w_{bx})\in\mathcal{M}_{bx}.
\end{equation}
Here $C=2A$, $(w_{bx})_{ij}$ is the auxiliary variable representing $b_i x_j$, and $\langle C,w_{bx}\rangle=\sum_{i,j}C_{ij}(w_{bx})_{ij}$ is the Frobenius inner product; $\mathcal{M}_{bx}$ denotes the corresponding McCormick envelope constraints.
Finally, it lifts the remaining concave quadratic component,
\begin{equation}
-0.4375\|x\|_2^2
\Longrightarrow
-0.4375\,\mathrm{Tr}(Z_x),
\qquad
\begin{bmatrix}
1 & x^\top\\
x & Z_x
\end{bmatrix}\succeq 0.
\end{equation}
The ordering separates domain relaxation, local product convexification, and stronger matrix lifting.

\subsection{Trajectory for \texttt{H\_MIXED\_1\_n4\_929}}\label{sec_supp_trajectory3}
The objective contains two cross-variable bilinear structures and two diagonal quadratic blocks. The first two steps apply McCormick envelopes to the bilinear terms,
\begin{equation}
2x^\top Dy
\Longrightarrow
-2.572\,w_{x_0y_0}+2.834\,w_{x_1y_1}+5.174\,w_{x_2y_2}+0.644\,w_{x_3y_3},
\end{equation}
and
\begin{equation}
0.34235\,x^\top y
\Longrightarrow
0.34235\sum_{i=0}^{3}w_{x_iy_i}.
\end{equation}
The third step applies McCormick relaxation to the separable \(y\)-quadratic block,
\begin{equation}
y^\top Qy
\Longrightarrow
1.286\,w_{y_0y_0}-1.417\,w_{y_1y_1}-2.587\,w_{y_2y_2}-0.322\,w_{y_3y_3}.
\end{equation}
The final step applies SDP lifting to the remaining \(x\)-quadratic block,
\begin{equation}
x^\top Qx
\Longrightarrow
\mathrm{Tr}(QZ_x),
\qquad
\begin{bmatrix}
1 & x^\top\\
x & Z_x
\end{bmatrix}\succeq 0.
\end{equation}

\paragraph{Analysis}
For \texttt{H\_MIXED\_1\_n4\_929}, the learned trajectory improves the McCormick lower bound from $-85.65$ to $-84.82$, with solver-root and SDP reference values of $-84.40$ (Table~\ref{tab:representative_trajectories}). The instance illustrates how the policy combines relaxation mechanisms when the reference bounds are already tight.

The policy applies McCormick envelopes to the cross-variable products and separable quadratic components, then reserves SDP lifting for the remaining quadratic block. This sequence separates local product structure from residual curvature and assigns each component a corresponding relaxation mechanism. The trajectory thus provides a concrete example of term-level placement and adaptive sequencing.

\section{Application formulations and instance construction}\label{sec_supp_applications}
Application results are reported in the main paper. Beamforming introduces dense quadratic coupling through signal-quality requirements and transmit-power limits, while localization introduces geometric coupling through noisy distance constraints. The formulations and generation settings are specified here; their evaluation uses the frozen policy described in Appendix~\ref{sec_supp_training_protocol} and the solver budgets in Appendix~\ref{sec_supp_protocol}.
\subsection{Multicast beamforming}\label{sec_supp_beamforming}
The beamforming application uses a real-valued max-min multicast beamforming QCQP. For each instance, a beamformer \(w\in\mathbb{R}^{n}\) and scalar margin \(t\) are optimized under quadratic signal-quality constraints,
\begin{equation}
\begin{aligned}
\max_{w,t}\quad & t\\
\mathrm{s.t.}\quad & (h_k^\top w)^2 \ge t,\quad k=1,\ldots,K,\\
& \|w\|_2^2 \le P,\\
& -\bar w \le w_i \le \bar w,\quad t\ge 0.
\end{aligned}
\label{eq:beamforming_model}
\end{equation}
We generate four instances with four antennas and three users. Channel vectors are sampled from a standard normal distribution, and $\gamma_k=1$ for every user. The power budget is $P=1.05\lVert w_{\mathrm{feas}}\rVert_2^2$, where $w_{\mathrm{feas}}$ is the minimum-norm solution of $h_k^\top w=\sqrt{\gamma_k}$. The componentwise bound is $\bar w=\min\{2,\sqrt{P}\}$.

The reference value is the best solver objective value returned by the general-purpose nonconvex solvers in the application evaluation pipeline. The learned policy and handcrafted references are evaluated by solving the convex surrogates that they generate. Because the beamforming formulation is a maximization problem, the reported relaxation values are upper bounds.

\subsection{Sensor network localization}\label{sec_supp_localization}
The sensor network localization application uses a bounded-noise anchored localization QCQP. Unknown sensor positions \(x_i\in\mathbb{R}^d\) are localized using anchors \(a_k\) and noisy squared-distance observations \(\widehat d^2\). The sets \(E_{ss}\) and \(E_{sa}\) index observed sensor--sensor and sensor--anchor pairs, respectively, and \(\Delta\) is the observation interval half-width. The formulation is
\begin{equation}
\begin{aligned}
\min_{\{x_i\}}\quad & \sum_i \|x_i\|_2^2\\
\mathrm{s.t.}\quad
& \widehat d_{ij}^2-\Delta \le \|x_i-x_j\|_2^2 \le \widehat d_{ij}^2+\Delta,
&& (i,j)\in E_{ss},\\
& \widehat d_{ik}^2-\Delta \le \|x_i-a_k\|_2^2 \le \widehat d_{ik}^2+\Delta,
&& (i,k)\in E_{sa},\\
& \ell_i \le x_i \le u_i.
\end{aligned}
\label{eq:localization_model}
\end{equation}
We generate four two-dimensional instances. Each contains three sensors, three anchors, three sensor--sensor edges, and all nine sensor--anchor edges. Gaussian noise with standard deviation 0.03 perturbs each squared distance. The interval half-width is $\Delta=0.06$. Initial coordinate bounds of $[-3,3]$ are tightened using the anchor observations before relaxation.

The reference values are the best solver objective values returned by the general-purpose nonconvex solvers in the application evaluation pipeline. The learned policy, McCormick endpoint, and SDP endpoint are evaluated as convex lower-bounding relaxations of the minimization problem.

\printbibliography
\end{document}